\newcommand{\xhdr}[1]{\vspace{0em}\noindent{{\bf #1.}}}
\newtheorem{theorem}{Theorem}
\newtheorem{lemma}{Lemma}
\newcommand{\oursbf}{\textbf{MIRA}}
\newcommand{\method}
\newcommand{\xmark}{\textcolor{red!80!black}{\ding{55}}}
\newcommand{\cmark}{\textcolor{green!70!black}{\ding{51}}}
\definecolor{Gray}{gray}{0.85}
\definecolor{LightCyan}{rgb}{0.88,1,1}
\newcolumntype{a}{>{\columncolor{Gray}}c}
\newcolumntype{b}{>{\columncolor{white}}c}
\title{Memory-Integrated Reconfigurable Adapters: A Unified Framework for Settings with Multiple Tasks}
\author{%
  Susmit Agrawal$^{1,2,3*\text{\textdagger}}$ \\ \texttt{susmit.agrawal@bethgelab.org} \And Krishn Vishwas Kher$^{1*}$ \\ \texttt{cs19b23p000001@iith.ac.in} \And Saksham Mittal$^1$ \\ \texttt{ai22btech11024@iith.ac.in} \And Swarnim Maheshwari$^1$ \\ \texttt{
cs25mtech02006@iith.ac.in} \And Vineeth N. Balasubramanian$^{1,4}$ \\ \texttt{
vineethnb@cse.iith.ac.in} \\ \texttt{vineeth.nb@microsoft.com} \\ \\
$^1$IIT Hyderabad \quad $^2$University of Tübingen \quad $^3$Tübingen AI Center \quad $^4$Microsoft Research, India\\ \\
$^*$ \texttt{Equal Contribution} \\
$^\text{\textdagger}$ \texttt{Majority of work done at IIT Hyderabad}
}
\begin{document}

\maketitle

\begin{abstract}
Organisms constantly pivot between tasks such as evading predators, foraging, traversing rugged terrain, and socializing, often within milliseconds. Remarkably, they preserve knowledge of once-learned environments sans catastrophic forgetting, a phenomenon neuroscientists hypothesize, is due to a singular neural circuitry dynamically overlayed by neuromodulatory agents such as dopamine and acetylcholine.
In parallel, deep learning research addresses analogous challenges via domain generalization (\textbf{DG}) and continual learning (\textbf{CL}), yet these methods remain siloed, despite the brain’s ability to perform them seamlessly. In particular, prior work has not explored architectures involving associative memories (\textbf{AM}s), which are an integral part of biological systems, to jointly address these tasks. We propose Memory-Integrated Reconfigurable Adapters (\textbf{MIRA}), a unified framework that integrates Hopfield-style associative memory modules atop a shared backbone. These memory modules store adapter-weight updates as values and retrieve them via learned keys. Associative memory keys are learned post-hoc to index and retrieve an affine combination of stored adapter updates for any given task or domain on a per-sample basis. By varying only the task-specific objectives, we demonstrate that \textbf{MIRA} seamlessly accommodates domain shifts and sequential task exposures under one roof. Empirical evaluations on standard benchmarks confirm that our \textbf{AM}-augmented architecture significantly enhances adaptability and retention: in \textbf{DG}, \textbf{MIRA} achieves SoTA out-of-distribution accuracy, and in incremental learning settings, it outperforms architectures explicitly designed to handle catastrophic forgetting using generic \textbf{CL} algorithms. Extensive ablation studies validate the necessity of both associative memory storage and post-hoc key learning for robust interpolated retrieval of adapters. By unifying adapter-based modulation with biologically inspired associative memory, \textbf{MIRA} delivers rapid task switching and enduring knowledge retention in a single extensible architecture, charting a path toward more versatile and memory-augmented AI systems. \footnote{Project Page: \hyperlink{https://snimm.github.io/mira\_web/}{https://snimm.github.io/mira\_web/}}
\end{abstract}

\section{Introduction}
\vspace{-4pt}



Organisms across the animal kingdom navigate myriad 
environments and behavioral demands, flexibly switching between survival tasks (such as foraging for food or evading predators) and complex social interactions, within fractions of a second. Concrete examples include: echolocating bats, which adjust their sonar pulse rates from 20 to 200 Hz in milliseconds when tracking evasive prey, while simultaneously computing three-dimensional flight paths to avoid obstacles \cite{doi:10.1073/pnas.1424457112,doi:10.1073/pnas.2011719117}, or jazz pianists among humans who instantaneously transition between playing a memorized composition and spontaneous improvisation, a cognitive shift marked by distinct prefrontal activation patterns \cite{PMID:28397108,pub.1018693169}. Similarly, many animals (including humans) learn to navigate a particular environment, such as intricate pathways of a dense forest, or subtle acoustic cues of a predator’s approach, and retain that knowledge indefinitely, without the catastrophic forgetting that plagues AI systems \cite{wang2024comprehensivesurveyforgettingdeep, fu2025knowledge}. 
Such phenomena are commonly attributed to the brain's capability to rapidly repurpose the same circuitry for multiple tasks without dismantling its core wiring \cite{Duncan2001adaptivecoding,Miller2001integrative,Bocincova2022flexible}. Some neuroscientific observations indicate the presence of overlapping sets of neurons that encode multiple task rules simultaneously \citep{Miller2001integrative,Rigotti2013,Warden15801taskdependent}, with neuromodulatory signals that regulate the active rule at a given time \citep{Lee2012neuromodulationbrainstates}.

From another perspective, the field of deep learning has developed a rich taxonomy of paradigms that echo these natural behaviors. Domain generalization (DG) methods ensure robustness to distribution shifts \cite{wang2021domain}; for example, a driver-assistance model that has learned to drive during daytime adapts to driving at night. Domain-incremental learning (DIL) \cite{huang2023class} seeks to learn and identify the same objects in new orientations, abstractions, and settings; for example, a model learns objects from cliparts and then learns to identify the same objects in anime or real world. Class-incremental learning (CIL) \cite{huang2023class} aims to accumulate knowledge of newer classes as they arrive over time without forgetting; for example, a model that learns to identify flora and fauna being introduced to data from a new continent. 
Although these paradigms differ in terms of data availability, distributional shifts, and forgetting dynamics (and are treated so most commonly in literature), they share a common thread: adapting efficiently to new tasks or environments. Efforts in these paradigms have largely progressed in isolation, unlike in the brain where such adaptation tasks are handled conjointly \citep{Miller2001integrative,Rigotti2013,Warden15801taskdependent}. We seek to address this gap in this work. An ancillary line of work on parameter-efficient fine‐tuning (PEFT) has attempted adaptation with an objective of parameter efficiency by freezing a given base model and adapting it to new tasks or domains via small, task‐specific “adapters". These adapters are overlaid over the base model to allow switching between different tasks. Techniques such as LoRA \cite{hu2021lora}, VeRA \cite{kopiczko2024vera}, and FourierFT \cite{DBLP:conf/icml/GaoWCLWC024} instantiate this idea. 

\begin{wrapfigure}[16]{lt}
{0.6\linewidth}
    \centering
    \vspace{-12pt}
    
\includegraphics[width=\linewidth]{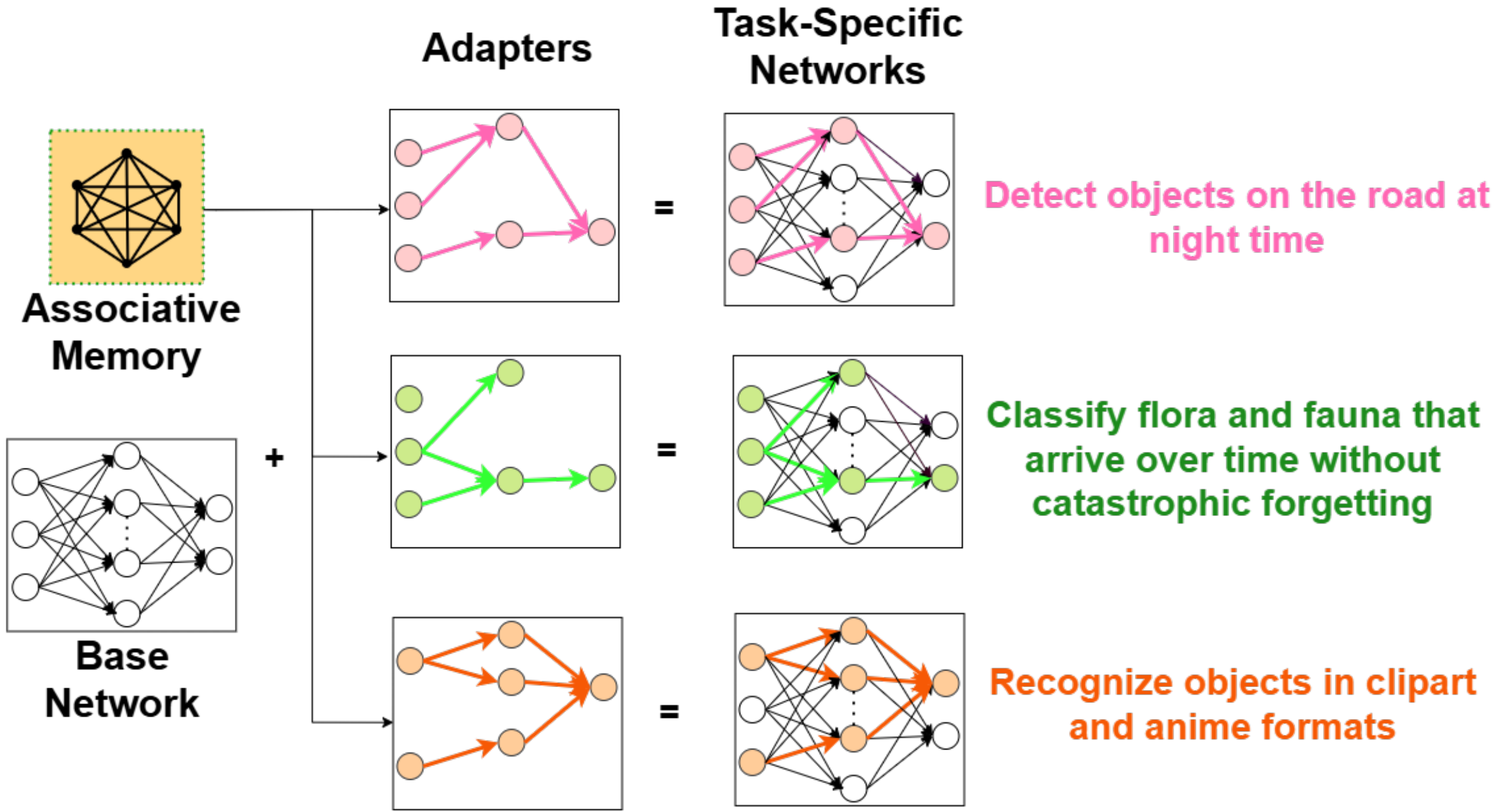}
    \caption{\footnotesize Associative memories can enable networks to quickly adapt to diverse tasks, by storing and recalling task-specific weights on-demand. \oursbf \ proposes a framework for such an approach.}
    \label{fig:abstract}
\end{wrapfigure}

However, despite a conceptual resemblance to neural task-switching, existing work has predominantly overlooked explicit memory-based mechanisms that biology suggests are fundamental to rapid and efficient adaptation \citep{Kessler2017memory}. Motivated by this observation, 
we propose a novel architecture and learning methodology that explicitly integrates biologically plausible associative memory models into deep learning frameworks, as illustrated in Figure \ref{fig:abstract}. Diverging from contemporary work in associative memories that stores raw data or their representations, our architecture stores weight adapters as values within an associative memory. Furthermore, instead of using fixed keys, we learn retrieval keys post hoc to optimally recall adapters for task-specific modulation of the substrate network. These keys facilitate accurate and context-sensitive retrieval through a Hopfield network, effectively generating affine combinations of adapter adjustments required for task-specific modulation at inference time, thus enhancing knowledge retention and out-of-domain generalization. Our proposed neuro-inspired framework thus establishes a common umbrella architecture capable of simultaneously and effectively addressing DG, CIL, and DIL scenarios. Our approach differentiates itself by serving all these settings through only minor adjustments in task-specific objective functions. Our key message in this work is to expound the utility of integration of associative memories into DL architectures for improved efficacy on multiple tasks, rather than any specific heuristic or method to outperform baselines on one given setting. Our main contributions comprise:

\vspace{-3pt}
\begin{itemize}[leftmargin=*]
\setlength{\itemsep}{-0.05em}
  \item \textbf{Unified Framework:} We introduce \textbf{MIRA}, a framework that leverages biologically-inspired associative memories to propose a unified architecture for DG, CIL, and DIL.
  \item \textbf{Key Refinement of Hopfield Networks:} Our core technical novelty lies in embedding Hopfield networks in every ViT layer, dynamically aligning their keys to preceding layer activations instead of using static indexing keys, thus allowing the model to learn appropriate indexing rules. 
  \item \textbf{Comprehensive empirical evaluation:} We study \textbf{MIRA} on standard benchmarks across multiple settings, attaining state‐of‐the‐art (SoTA) accuracy in multiple settings, outperforming task-specialized architectures by as much as 10\% in some cases.
\end{itemize}

\section{Background and Related Work}

\textbf{Memory in Deep Learning}:
Traditional Hopfield networks \citep{hopfield1982neural} pioneered the computational models of associative memories by allowing a set of stored binary patterns to be retrieved via energy minimization. Recent variants, including Modern Continuous Hopfield Networks (\textbf{MCHN}) \citep{ramsauer2021hopfield} and Universal Hopfield Networks (\textbf{UHN}) \citep{millidge2022universal} improved upon the original to achieve exponentially greater storage capacity in addition to storing and retrieving real-valued vectors. Other forms of explicit memory have also been previously used in various architectures \citep{Weston2015memory,NIPS2015e2ememory, Graves2014ntm, Graves2016dnc}, integrating explicit read/write operations to an external memory module to support long-range dependency handling, albeit each having its own distinct formulation. Recent works have also studied memory networks that can operate via Predictive Coding to better emulate biological memories \cite{yoo2022bayespcn,tang2023sequential}. 

Recent advancements have explored the integration of associative memory mechanisms into diverse machine learning paradigms.
Saliency-Guided Hidden Associative Replay (SHARC) \citep{baiSaliencyGuidedHiddenAssociative2023} framework utilizes associative memory to store and replay salient data representations, enhancing retention of prior knowledge. \citep{kasabovBrainInspiredSpatioTemporalAssociative2023} present a spiking neural network model that emulates associative memory functions for classifying neuroimaging data. \citep{wuBraininspiredGloballocalLearning2022} develop a neuromorphic computing framework that integrates global and local learning mechanisms, drawing inspiration from the brain's associative memory processes. These works only consider memories as a storage medium for data, independent of the main forward pass. \citep{agrawal2025can} postulates that associative memories can store and retrieve neuromodulatory signals given the input context, achieving performance comparable to storage and retrieval of model weights from disk. However, it considers the AM as a disjoint module from the main neural network, posing it as akin to a biologically-plausable storage medium. We instead use AMs as an integral part of the forward pass, adjustable via backpropagation.


\textbf{Adapters in Contemporary Models}:
The surge in large pre-trained models has led to methods that minimize the computational and storage overhead associated with fine-tuning. Techniques like LoRA \citep{hu2021lora} and its variants \citep{kopiczko2024vera,gao024fft,zhang2023adalora,hyeon2021fedpara,yeh2024lycoris,Buehler_XLoRA_2024} factorize weight updates into low-rank matrices, while Prefix Tuning \citep{li-liang-2021-prefix}, Adapter Layers \citep{pmlr-v97-houlsby19a}, LayerNorm Tuning \citep{zhao2024layennormtuning}, and BitFit \citep{ben-zaken-etal-2022-bitfit} similarly constrain the number of trainable parameters.
These methods significantly reduce computational and storage overhead by constraining trainable parameters to minimal subsets or low-rank structures.


\textbf{Unified Frameworks for Multiple ML Settings}:
Machine Learning literature tackled multiple paradigms like Domain Generalization (DG) \citep{zhouDomainGeneralizationSurvey2023}, Continual Learning (CL) \citep{wangComprehensiveSurveyContinual2024}, and multi-task learning (MTL) \citep{mtl2020survey} largely in isolation with few exceptions \citep{kundu2020class, park2024versatile}. Our work focuses on a unified framework driven by the practical necessity of robust models that generalize across new and previously unseen tasks, or learning new tasks and domains without forgetting prior knowledge. Unlike existing methods, our \oursbf \ framework brings a fresh approach, providing both high adaptability and robust knowledge retention in a single extensible architecture, marking a significant innovation in handling diverse and evolving ML challenges.

\section{Method}
\label{sec:method}
\textbf{Preliminaries and Notation.} 
Formally, a task $t \in [T]$ is defined by a dataset, $\mathcal{D}_t = \bigl\{(x_i^{(t)},y_i^{(t)})\bigr\}_{i=1}^{N_t}$, sampled from a probability distribution $\mathcal{P}^{(t)}(\mathcal{X}^{(t)} \times \mathcal{Y}^{(t)})$. Here, $T$ denotes the total number of tasks, and $\mathcal{X}^{(t)}$, $\mathcal{Y}^{(t)}$, the domains of features and labels for task $t$ respectively.


\begin{figure}[h!]
        \includegraphics[width=1.0\linewidth]{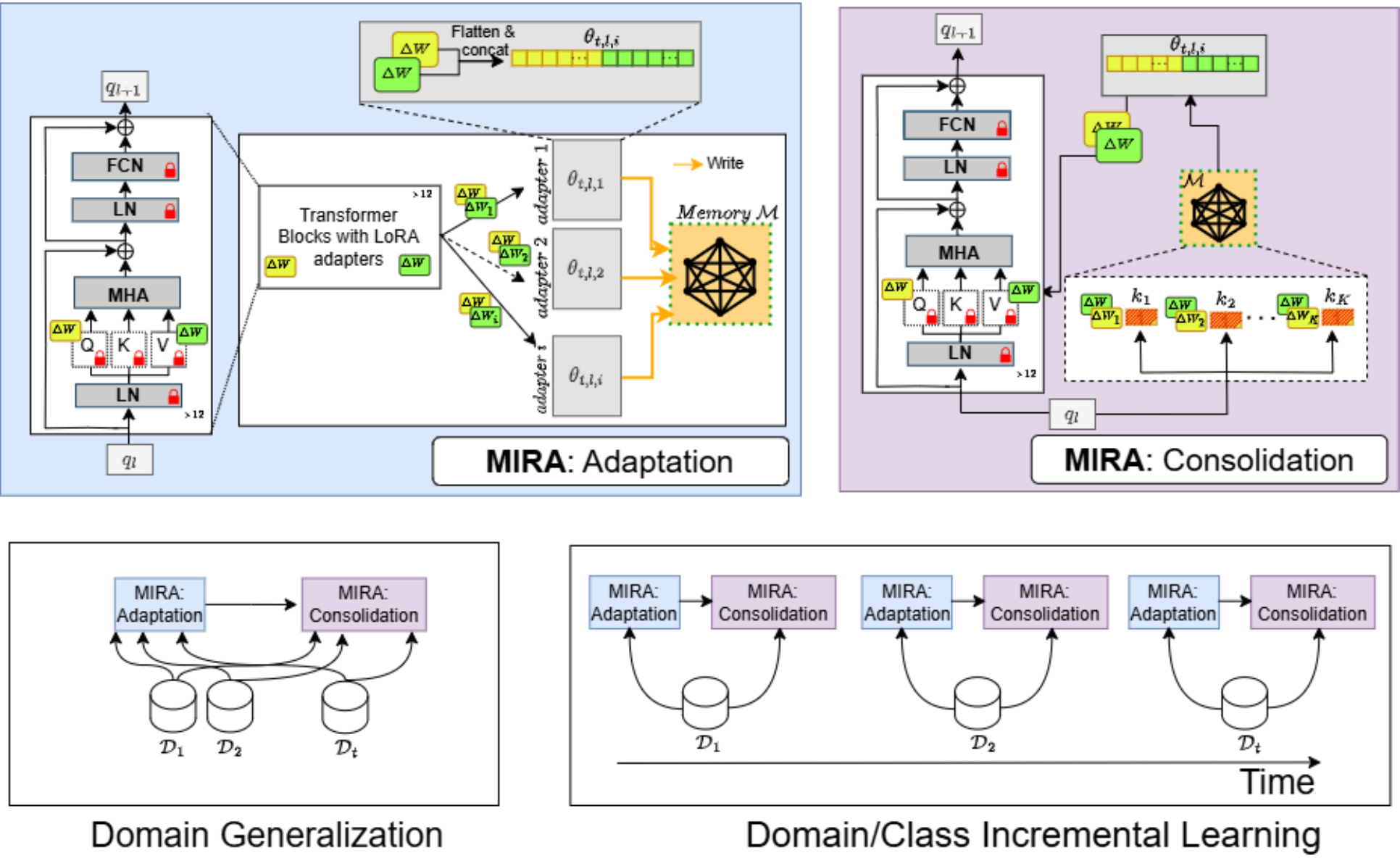}
        \caption{Overview of \oursbf \ for Domain Generalization and Continual Learning scenarios. In DG, all training tasks are provided together to both the Adaptation and the Consolidation stages. In the CL scenarios, the dataset for each task arrives sequentially, and each dataset is passed to both stages. The Adaptation stage trains adapters for each task, while the Consolidation stage learns the associated keys for the stored adapters.}
        \label{fig:enter-label}
\end{figure}

Our method is demonstrated over a frozen substrate network $\mathcal{F}$ (e.g., a ViT) consisting of \(L\) layers. At each layer $\ell \in L$, we attach rank-\(r\) LoRA adapters to the Query and Value matrices of the transformer, jointly represented by a flattened vector 
\(\theta_\ell^{(t)}\in\mathbb R^{d_v}\).

Adapters are stored in \textbf{UHN} memory units $\mathcal{M}_{\ell}$ attached to each layer. Each memory unit is equipped with a \emph{write} operation, denoted by $\mathsf{W}(\mathcal{M}_{\ell}, k, \theta)$, and a \emph{read} operation $\mathsf{R}(\mathcal{M}_{\ell}, q)$. A \emph{write} operation in the default \textbf{UHN} writes a key-value pair $(k, \theta)$ (in that order) to $\mathcal{M}_{\ell}$ and a \emph{read} operation retrieves a weighted combination of values stored in $\mathcal{M}_{\ell}$ when prompted with a query $q$. Concretely, if \(\mathbf K_\ell\in\mathbb R^{d_k\times N}\) and \(\mathbf \Theta_\ell\in\mathbb R^{d_v\times N}\) represent $N$ keys-value pairs respectively:
\begin{equation}\label{eqn:UHN}
\mathsf{R}(\mathcal{M}_\ell,\,q)
\;\equiv\;
\mathbf \Theta_\ell \,\mathsf{sep}\bigl(\mathsf{sim}(\mathbf K_\ell^\top, q)\bigr),
\end{equation}

where $\mathsf{sep}$ is a separation function such as Softmax, and $\mathsf{sim}$ is a similarity function like the Euclidean inner product. 

The Modern Hopfield Network \cite{ramsauer2020hopfield} (MHN) was proposed with the Softmax function to achieve superlinear storage capacity with respect to the query embedding dimension. \citep{millidge2022universal} generalized the notion proposed by the MHN to accommodate functions besides Softmax, dubbed "Separation functions" as these functions are responsible for assigning weights to stored memories during recall. Separation functions facilitate accurate retrieval of a specific memory by assigning a very high weight to the concerned memory, or allow for recall of a superposition of memories by assigning diffused weights. We primarily use an affine function to compute the weights of the combination.

We illustrate the efficacy of our approach on three learning paradigms comprising DG, DIL, CIL. The primary differences between these settings lies in how data is made available (summarized in Table \ref{tab:data_abl}), while the overarching objective in all these settings is to achieve high accuracy on the test distribution using optimally trained parameters $\varphi$ across all training tasks, i.e.:
\begin{equation}
\min_{\varphi} \mathbb{E}_{(x, y) \sim \mathcal{D}_{\mathsf{test}}}[[\mathcal{F}(x; \varphi) \neq y]].
\end{equation}
\begin{table}[h]
\centering
\rowcolors{2}{gray!10}{white}
\begin{tabular}{|>{\raggedright\arraybackslash}p{6cm}|c|c|c|}
\hline
\rowcolor{gray!30}
\textbf{Data Availability} & \textbf{DG} & \textbf{DIL} & \textbf{CIL} \\
\hline
Tasks arrive sequentially? & \xmark & \cmark & \cmark \\
Same label sets across tasks? & \cmark & \cmark & \xmark \\
Task identifier available at inference?  & \xmark & \xmark & \xmark \\
Test distribution seen during training?  & \xmark & \cmark & \cmark \\
\hline
\end{tabular}
\caption{Comparison of data availability across ML settings.}
\label{tab:data_abl}
\end{table}

\textbf{Conceptual Framework.} 
The chosen settings underscore the need for the trained parameters $\varphi$ to \emph{consolidate} knowledge \emph{adapted} from diverse tasks, enabling effective use of relevant information during inference. This is usually done via multiple "expert" models, each learning a subset of the tasks at hand, making their individual predictions \cite{arpit2022ensemble,chen2024lfme,rypeść2024divideforgetensembleselectively}. Heuristics then dictate which output is considered at the end.
At their core, however, all these methods rely on the parameters learned per task, which serve as reservoirs of task-specific knowledge acquired during training. 
Memory systems capable of storing and retrieving such parameters based on a (possibly transformed) input query are thus a natural fit in such settings, as they allow retrieval of input-specific weight combinations on demand. The central question, then, is how to efficiently retrieve an appropriate set or ensemble of parameters on a per-sample basis at test time. Formally, for an input dataset $\mathcal{D}$, this can be posed as an optimization problem over the weights of the combination, a.k.a:
\begin{equation}
\label{eqn:test-time-opt}
 \{ \alpha^{*}_{t, l}\}^{T,L} := \arg \min_{\alpha_{t, l}}\mathbb{E}_{(x, y)\sim\mathcal{D}}\left[\left[\mathcal{F}\left(x;\sum_{t=1}^{T}\sum_{l=1}^{L}\alpha_{t, l}(x)\theta_{t, l}\right) \neq y\right]\right].
\end{equation}

In general the above problem is highly non-convex and may involve first/second order methods to solve, which is computationally demanding at a per-sample level. We critically observe that, for an arbitrary key matrix $M$, Euclidean inner product as the similarity function, and values $\Theta_{\ell}$, the associative memory retrieves consolidated parameters for $\mathcal{F}$ if $\alpha_{t,\ell} = \mathsf{sep}(\langle M, g_{\ell}(x) \rangle)$ for some function $g_{\ell}(x)$. 
The following lemma formalizes this intuition:

\begin{lemma}
    Let $\mathcal{H}_k$ denote the reproducing-kernel Hilbert space induced by the kernel $k(\cdot, \cdot)$, and assume an optimal solution to Eqn. \ref{eqn:test-time-opt} $\{\alpha^{*}_{t,l}(x)\}^{T,L}$ admits a representation in a finite eigenbasis of the integral operator associated with $k$. Then, for any dataset $\mathcal{D}$ drawn from a distribution encountered during training, \textbf{AM} retrieval allows obtaining the minimum characterized by Eqn. \ref{eqn:test-time-opt}.
\end{lemma}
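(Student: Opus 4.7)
My plan is to invoke Mercer's theorem on the kernel $k$, then construct the AM's query map $g_\ell$ and key matrix $\mathbf{K}_\ell$ so that the retrieval formula of Eqn.~\ref{eqn:UHN} exactly reproduces the optimal per-sample coefficients $\alpha^*_{t,\ell}(x)$. Concretely, by Mercer one writes $k(x, x') = \sum_{i \geq 1} \lambda_i \phi_i(x) \phi_i(x')$, where $\{\phi_i\}$ is an orthonormal eigenbasis of the integral operator associated with $k$ with nonnegative eigenvalues $\lambda_i$. The finite-eigenbasis hypothesis then provides, for each pair $(t, \ell)$, an exact expansion $\alpha^*_{t,\ell}(x) = \sum_{i=1}^m c_{(t,\ell),i}\,\phi_i(x)$ over some common finite index set of size $m$.

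Next I would set $g_\ell(x) := (\phi_1(x), \ldots, \phi_m(x))^\top$ and assemble $\mathbf{K}_\ell \in \mathbb{R}^{m \times T}$ so that its $(i,t)$-entry equals $c_{(t,\ell),i}$. Taking $\mathsf{sim}$ to be the Euclidean inner product and $\mathsf{sep}$ to be an affine map (e.g., the identity, as used in the paper), the $t$-th component of $\mathbf{K}_\ell^\top g_\ell(x)$ is exactly $\alpha^*_{t,\ell}(x)$, so the memory read in Eqn.~\ref{eqn:UHN} evaluates to $\mathbf{\Theta}_\ell \cdot \bigl(\alpha^*_{1,\ell}(x), \ldots, \alpha^*_{T,\ell}(x)\bigr)^\top = \sum_{t=1}^T \alpha^*_{t,\ell}(x)\,\theta_{t,\ell}$, which is precisely the consolidated adapter realizing the optimum of Eqn.~\ref{eqn:test-time-opt} pointwise. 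Since $\mathcal{D}$ is drawn from a distribution encountered during training, the upstream layers of $\mathcal{F}$ have seen sufficient data to encode $g_\ell$ (possibly up to a learnable linear recoding absorbed into $\mathbf{K}_\ell$ during post-hoc key learning), so taking expectation over $\mathcal{D}$ preserves the pointwise optimality and yields the minimum claimed in the lemma.

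The step I expect to be most delicate is the realizability of the eigenbasis query map $g_\ell$: the construction is valid only if the activations at layer $\ell-1$ of the substrate are rich enough that a linear projection (encoded in the keys) recovers $\phi_1(x), \ldots, \phi_m(x)$ on the support of the training distribution. Justifying this requires appealing to universal-approximation properties of the frozen transformer backbone together with the expressiveness afforded by the post-hoc key-learning stage; without such an assumption one recovers only an approximate, rather than exact, attainment of the optimum. A secondary subtlety is the choice of $\mathsf{sep}$: the argument as sketched hinges on its affine form, and a nonlinear choice such as Softmax would require absorbing the nonlinearity into $g_\ell$ or into the coefficient decomposition, which complicates the bookkeeping but does not alter the conclusion.
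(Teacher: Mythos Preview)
Your proposal is correct and follows essentially the same route as the paper: invoke Mercer's theorem to obtain an orthonormal eigenbasis, set $g_\ell$ to output the finitely many eigenfunctions appearing in the assumed expansion of $\alpha^*_{t,\ell}$, and choose the key matrix so that the inner-product similarity with affine separation reproduces the optimal coefficients inside the UHN read of Eqn.~\ref{eqn:UHN}. Your version is in fact more explicit than the paper's, correctly identifying the key-matrix entries as the expansion coefficients $c_{(t,\ell),i}$ and flagging the realizability of $g_\ell$ and the role of the affine $\mathsf{sep}$ as the two places where the argument leans on the framework's design choices.
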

\vspace{-62pt}
\begin{proof}
    By Mercer's theorem on compact spaces \cite{10.1007/11776420_14}, $\mathcal{H}_k$ always has an orthonormal eigenbasis. Since we assume $\{\alpha^{*}_{t,l}(x)\}^{T,L}$ to admit a representation via some finite number of eigenfunctions $f_i, ~1\leq d\leq q$, we can pick the function $g_\ell$ to be the function that outputs $[f_1(x), f_2(x) \ldots f_q(x)]^{\mathsf{T}}$, the key matrix $M$ as the corresponding eigenvalues. Since such a choice is subsumed under the \textbf{AM} retrieval dynamics elucidated in Equation \ref{eqn:UHN}, the assertion in the lemma follows. 
\end{proof}
\vspace{-60pt}
Importantly, this retrieval operates per-sample at inference without requiring gradient computation.

Thus, motivated by \textbf{AM} systems, and specifically \textbf{UHN}s, we propose a simple and general mechanism for retrieving adapter weights. Assuming the availability of appropriate keys indexing into the adapters, retrieving an effective ensemble reduces to computing inner products between the query $x_{\mathsf{test}}$ (or its processed representation) and the stored keys. Consequently, the adapter weight combination is dictated by the geometry of the inner product space in which the \textbf{UHN} operates.

This strategy yields a single, task-agnostic model that dynamically composes per-task adapters via an \textbf{AM}. Conceptually, training proceeds in two distinct stages. The first involves standard training of independent adapters per task using any suitable method for the setting of interest, followed by storing these adapters into the associative memory. This storage requires a set of keys capable of indexing the adapters, which may be either fixed or randomly initialized.

\begin{algorithm}[H]
\caption{\oursbf: \textbf{Training}}
\label{alg:mira}
\begin{algorithmic}[1]
\SetAlgoLined
\SetNoFillComment
  \begin{tcolorbox}[
      colback=gray!20,      
      colframe=gray!20,      
      boxrule=0pt,           
      left=1mm, right=2mm, top=-1mm, bottom=-3mm, 
      enlarge left by=0mm, enlarge right by=0mm,
      sharp corners
    ]
    \textbf{Require}: Tasks \(\{\mathcal D_t\}_{t=1}^{T}\), frozen backbone $\mathcal{F}$, \textbf{AM} models $\bigcup_{\ell=1}^{L}\mathcal{M}_\ell$ \\
  \end{tcolorbox}
\FOR{\(t=1\) \TO \(T\)}
\STATE $\mathsf{Adaptation}(\mathcal{D}_t, \mathcal{F}, \bigcup_{\ell=1}^{L}\mathcal{M})$ := $\begin{cases} 
      $\tcc{Train adapters $\theta_{t, \ell, i}$ via method specific loss.}$ \\
      $\tcc{Write $\theta_{t, \ell, i} \mapsto \mathcal{M} $ via placeholder keys.}$ \\ 
\end{cases}$
\IF{$\mathsf{Setting} == \mathsf{CL}$}
\STATE $\mathsf{Consolidation}(\mathcal{D}_t, \mathcal{F}, \bigcup_{l=1}^{L}\mathcal{M}_{\ell})$ := $\begin{cases} 
      $\tcc{Finetune only keys via second pass over data.}$ \\
      $\tcc{Apply CL heuristics to handle catastrophic forgetting.}$ \\ 
\end{cases}$
\ENDIF 
\ENDFOR \\
\IF{$\mathsf{Setting} == \mathsf{DG}$}
\FOR{\(t=1\) \TO \(T\)}
\STATE $\mathsf{Consolidation}(\mathcal{D}_t, \mathcal{F}, \bigcup_{l=1}^{L}\mathcal{M}_{\ell})$
\ENDFOR
\ENDIF
\end{algorithmic}
\end{algorithm}

The second stage aims to ensure that the consolidated adapter weights retrieved via associative memory, when loaded into the backbone network $\mathcal{F}$, perform effectively on their corresponding tasks. If performance is suboptimal, the retrieval keys are refined to improve alignment between the retrieved adapter ensemble and the task it was originally trained for. Conceptually, this constitutes a constrained variant of the first stage, where updates are restricted to lie in the subspace spanned by the similarity-weighted combination of adapters learned earlier. Viewed differently, this stage attempts to solve Equation~\ref{eqn:test-time-opt} over a training set where $y_{\mathsf{train}}$ is \emph{known}.  Crucially, it facilitates implicit consolidation of cross-task knowledge via retrieval dynamics of the shared \textbf{AM}. Unlike prior uses of associative memories for storing raw content such as images or feature representations \cite{baiSaliencyGuidedHiddenAssociative2023, yoo2022bayespcn,amFeatures}, our formulation embeds the Hopfield network directly into the training loop. Moreover, standard strategies to mitigate catastrophic forgetting can be readily incorporated, as the adapter combinations themselves are treated as trainable parameters. Finally, whereas the first stage trains separate adapters per task and layer $\ell$, the refinement of retrieval keys in stage two depends on the specific learning scenario: for \textbf{DG}, all domain data is jointly accessible and thus the partition is the whole dataset itself, whereas for continual learning settings, tasks are discarded sequentially, and so the partitions in this case are the individual tasks themselves. This two-stage blueprint is formally outlined in Algorithm~\ref{alg:mira}.



\textbf{Method Design.} 
We now delve into specific design choices we make as we instantiate Algorithm \ref{alg:mira}. Note that both subroutines described below assume the task given as input, as allocating the right task to each subroutine is handled in Algorithm \ref{alg:mira}. 

\begin{algorithm}[H]
\caption{\oursbf: \textbf{Adaptation}}
\label{alg:stageA}
\begin{algorithmic}[1]
  \begin{tcolorbox}[
      colback=gray!20,      
      colframe=gray!20,      
      boxrule=0pt,           
      left=2mm, right=2mm, top=-1mm, bottom=-3mm, 
      enlarge left by=0mm, enlarge right by=10mm,
      sharp corners
    ]
    \textbf{Input:} Dataset $\mathcal{D}$, frozen backbone $\mathcal{F}$, \textbf{AM} models $\bigcup_{\ell=1}^{L}\mathcal{M}_{\ell}$, hyperparameter $\sigma^2$ \\
  \end{tcolorbox}

\FOR{$(x, y) \in \mathcal{D}$}

  \STATE Train memory adapters \(\theta_{\ell}\) on \(\mathcal D\) with $\mathsf{Cross}$-$\mathsf{Entropy}$ loss, $\forall \ell \in L$
\ENDFOR

\FOR{$l \in L$}
    \STATE Sample \(k_\ell\!\sim\!\mathcal N(0,\sigma^2I)\)
    \STATE $\mathsf{W}(\mathcal{M_\ell},\;k_\ell,\;\theta_\ell)$
\ENDFOR
\end{algorithmic}
\end{algorithm}

\textbf{Two-Stage Training.} The first of the two stages, dubbed \textit{Adaptation}, involves training the base network to adapt to a single new task. We perform this adaptation by training LoRA-style adapters for each layer, using the provided dataset. Once trained, these adapters are then stored in the associative memory. Since such storage requires data in the form of key-value pairs, we initially randomly choose keys for each adapter by sampling from $\mathcal{N}(0, \sigma^2I)$.
Notably, our method is \emph{not} constrained to storing LoRA adapters; infact, entire weight matrices can be stored into the associative memory. We choose LoRA adapters for simplicity and ease of implementation. 
\begin{algorithm}[H]
\caption{\oursbf: \textbf{Consolidation}}
\label{alg:stageB}
\begin{algorithmic}[1]
  \begin{tcolorbox}[
      colback=gray!20,      
      colframe=gray!20,      
      boxrule=0pt,           
      left=2mm, right=2mm, top=-1mm, bottom=-3mm, 
      enlarge left by=0mm, enlarge right by=10mm,
      sharp corners
    ]
    \textbf{Input:} Dataset $\mathcal{D}$, backbone $\mathcal{F}$, \textbf{AM}s $\bigcup_{\ell=1}^{L}\mathcal{M}_{\ell}$, frozen $\bigcup_{\ell=1}^{L}\mathbf{\Theta_{\ell}}$, initial keys $\bigcup_{\ell=1}^{L}\mathbf K_\ell$ \\
  \end{tcolorbox}
  \FOR{\((x,y) \in \cal{D}\)}
  \FOR{$\ell \in L$}
    \STATE Compute layer inputs \(h_{\ell-1}\) using $\theta_{\ell-1}$~\tcp{$h_0 \leftarrow x$}
    \STATE  \(q_\ell \leftarrow g_\ell(h_{\ell-1})\) ~\tcp{$g_\ell~$:query module for layer $\ell$}
    \STATE Read \(\hat\theta_\ell = \mathsf{R}(\cal{M}_\ell,q_\ell)\)
  \ENDFOR
    \STATE Compute $\mathsf{Cross}$-$\mathsf{Entropy}$ loss and back-propagate
    \STATE Update only $\bigcup_{\ell=1}^{L}\mathbf K_\ell$ and $\bigcup_{\ell=1}^{L}g_\ell$
  \ENDFOR 
\end{algorithmic}
\end{algorithm}

 To produce consolidated parameters for a given input, we explicitly train the randomly initialized keys using backpropagation in the second stage dubbed \emph{Consolidation}, to yield adapter ensembles that minimize cross-entropy over the training set. For a fixed value stored in the \textbf{UHN}, the keys pointing to the values are continuously updated across sequential task exposures. Specifically, a query input to the ViT backbone $\mathcal{F}$ passes sequentially through a stack of layers, each prepended with a lightweight query module $g_\ell: \mathbb{R}^{d_h} \to \mathbb{R}^{d_k}$. This module transforms the output $h_{\ell-1} \in \mathbb{R}^{d_h}$ of the previous layer into a query vector for layer $\ell$. The module can be instantiated as an identity map, a linear transformation, or a small neural network.

The transformed query $q_\ell = g_\ell(h_{\ell-1})$ is then matched against the keys at layer $l$ to compute similarity scores, which are normalized using their sum (as opposed to the norm of the sum) and used to weight the corresponding adapters in the associative memory. This weighted ensemble is loaded onto the current layer, and the modulated layer output becomes the input for the next layer, $h_{\ell+1}$.

Both the keys and the query modules are updated via backpropagation. The query module serves to align the geometry of the layer outputs with the key space, as these may naturally lie in different representational domains. Ultimately, the joint training of keys and query modules aims to produce the appropriate adapter combinations for inputs sampled from the task distribution encountered during training. In continual learning settings, where catastrophic forgetting is a concern, we incorporate standard mitigation techniques such as DualGPM \cite{10204088} within this framework. During inference, the architecture follows the procedure in Algorithm~\ref{alg:stageB}, except that no parameter updates are performed.

\section{Experiments and Results}

We rigorously evaluate \oursbf \ across the three scenarios - Class-incremental Learning (CIL), Domain-incremental Learning (DIL), and Domain Generalization (DG), on several benchmark datasets. 
Details regarding training protocols and hyperparameter tuning are provided comprehensively in the Appendix. 
Following PEGO \citep{hu2024learn}, we make use of the ViT-B/16 architecture \citep{dosovitskiy2021ViT} initialized with CLIP \citep{ramsauer2021hopfield} weights, and use LoRA with rank 4 for adapting to downstream tasks.

\begin{table}[h!]
\centering
\scriptsize
\caption{\scriptsize Comparison with SoTA CIL and DIL methods on three standard datasets. Baseline numbers have been taken from prior work. In the CIL setting, we divide all classes into 5 distinct tasks, while in the DIL setting, every domain serves as a separate task. \oursbf \ outperforms all baselines on average accuracy, with minimal forgetting. Best results are highlighted in \textbf{bold}, and results within 2\% of the best are \underline{underlined}.}
\label{tab:main_cil_dil}
\begin{tabular}{lc cc cc a}
\toprule
\multirow{2}{*}{Dataset} & \multirow{2}{*}{Method} & \multicolumn{2}{c}{CIL} & \multicolumn{2}{c}{DIL} & \multirow{1}{*}{Avg. Acc} \\
\cmidrule(lr){3-4} \cmidrule(lr){5-6}
 & & Avg.\ Acc.$\uparrow$ & Forgetting$\downarrow$ & Avg.\ Acc.$\uparrow$ & Forgetting$\downarrow$ & \\
\midrule
\multirow{10}{*}{\textbf{iDigits}}
& Fine-tuning      & 30.32$\pm$0.77 & 48.01$\pm$0.72 & 33.04$\pm$0.89 & 23.23$\pm$0.74 & 31.68 \\
 & EWC   \cite{ewc}        & 34.16$\pm$0.32 & 38.72$\pm$0.59 & 68.62$\pm$0.92 & 25.94$\pm$0.98 & 51.39 \\
 & LwF \cite{lwf}         & 39.88$\pm$0.91 & 33.35$\pm$0.52 & 69.61$\pm$0.33 & 25.81$\pm$0.69 & 54.75\\
 & L2P \cite{l2p}         & 63.17$\pm$0.88 & 28.53$\pm$0.81 & 73.83$\pm$0.26 & 23.43$\pm$0.65 & 68.50 \\
 & S-Prompts \cite{sprompt}   & 55.09$\pm$3.27 & 25.61$\pm$1.62 & 75.11$\pm$2.31 & 25.66$\pm$6.23 & 65.10  \\
 & DualPrompt \cite{wang2022dualprompt}  & 68.82$\pm$0.97 & \underline{11.81$\pm$1.77} & 76.42$\pm$0.46 & 26.33$\pm$0.62 & 72.62 \\
 & CODA-P  \cite{smith2023coda}     & 69.97$\pm$1.02 & 19.83$\pm$2.28 & 77.42$\pm$0.71 & 22.20$\pm$0.18 & 73.70 \\
 & LAE \cite{lae}          & 65.77$\pm$0.83 & 28.47$\pm$0.77 & 79.09$\pm$1.03 & 21.86$\pm$0.40 & 72.43 \\
 & ICON \cite{park2024versatile} & 71.53$\pm$0.68 & 19.36$\pm$1.17 & \textbf{84.83$\pm$0.51} & 
 12.67$\pm$0.61 & 78.18 \\
\cmidrule{2-7}
\rowcolor{LightCyan}
 & \textbf{Ours} (\oursbf)             & \textbf{83.00$\pm$1.29}          & \textbf{10.62$\pm$2.80}          & \underline{82.46$\pm$0.12}        & \textbf{8.49$\pm$0.43}           &  \textbf{82.73} \\
\midrule
\multirow{10}{*}{\textbf{CORe50}}
 & Fine-tuning      & 21.54$\pm$1.91 & 74.05$\pm$1.31 & 23.52$\pm$0.26 & 3.09$\pm$0.11  & 22.53\\
 & EWC \cite{ewc}           & 33.89$\pm$0.83 & 50.18$\pm$0.30 & 73.86$\pm$0.38 & \underline{1.09$\pm$0.12}  &  53.88\\
 & LwF \cite{lwf}        & 34.53$\pm$0.55 & 41.05$\pm$0.30 & 74.35$\pm$0.52 & \underline{0.81$\pm$0.27}  &  54.44 \\
 & L2P \cite{l2p}        & 70.03$\pm$0.51 & \underline{6.51$\pm$0.59}  & 80.72$\pm$0.39 & \underline{0.51$\pm$0.28}  & 75.38 \\
 & S-Prompts \cite{sprompt}   & 68.27$\pm$3.92 & 11.79$\pm$0.24 & 86.50$\pm$0.46 & \underline{0.92$\pm$0.31}  & 77.39 \\
 & DualPrompt  \cite{wang2022dualprompt}  & 71.96$\pm$0.37 & \underline{5.04$\pm$0.71}  & 81.41$\pm$0.22 & \underline{0.21$\pm$0.76}  & 76.69 \\
 & CODA-P \cite{smith2023coda}     & 77.85$\pm$0.44 & \textbf{4.78$\pm$0.37}  & 84.36$\pm$1.04 & \underline{0.64$\pm$0.14}  & 81.11 \\
 & LAE \cite{lae}          & 77.11$\pm$0.31 & 18.38$\pm$1.67 & 83.09$\pm$0.71 & \underline{0.17$\pm$0.51}  &  80.10 \\
 & ICON  \cite{park2024versatile}  & \underline{80.85$\pm$0.23} & 7.68$\pm$0.52 & 89.01$\pm$0.33 & \underline{0.17$\pm$0.21} & 84.93 \\
\cmidrule{2-7}
\rowcolor{LightCyan}
 & \textbf{Ours} (\oursbf)              & \textbf{83.39$\pm$0.24}          & 7.99$\pm$1.43           & \textbf{93.89$\pm$0.33}          & \textbf{0.00$\pm$0.00}          &\textbf{88.64} \\
\midrule
\multirow{10}{*}{\textbf{DomainNet}}
 & Fine-tuning      & 35.43$\pm$0.58 & 47.79$\pm$0.28 & 39.52$\pm$0.32 & 28.81$\pm$0.64 & 37.48 \\
 & EWC \cite{ewc}            & 53.04$\pm$0.53 & 24.41$\pm$0.48 & 41.58$\pm$0.26 & 26.79$\pm$0.15 & 47.31 \\
 & LwF \cite{lwf}         & 53.79$\pm$0.61 & 19.41$\pm$0.11 & 43.74$\pm$0.27 & 18.23$\pm$0.10 & 48.77 \\
 & L2P \cite{l2p}        & 60.90$\pm$0.69 & \underline{8.23$\pm$0.90}  & 48.55$\pm$0.81 & 19.71$\pm$1.29 & 54.73 \\
 & S-Prompts \cite{sprompt}  & 39.78$\pm$0.62 & 19.29$\pm$1.04 & 50.80$\pm$0.63 & \underline{4.20$\pm$0.53}  &  45.29 \\
 & DualPrompt  \cite{wang2022dualprompt}  & 62.55$\pm$0.92 & \underline{7.62$\pm$1.07}  & 51.33$\pm$0.10 & 9.60$\pm$1.41  & 56.94 \\
 & CODA-P \cite{smith2023coda}      & \underline{65.21$\pm$0.24} & 15.01$\pm$0.21 & 49.13$\pm$0.83 & 25.96$\pm$1.13 & 57.17 \\
 & LAE \cite{lae}          & \underline{65.06$\pm$0.18} & \underline{9.68$\pm$0.84}  & 44.67$\pm$0.62 & 28.99$\pm$0.64 & 54.87 \\
 & ICON \cite{park2024versatile}   & \underline{65.43$\pm$0.15} & \underline{9.72$\pm$0.46} & 54.44$\pm$0.21 & 13.32$\pm$0.46 & 59.94 \\
\cmidrule{2-7}
\rowcolor{LightCyan}
 & \textbf{Ours} (\oursbf)              & \textbf{67.29$\pm$0.19}          & \textbf{7.60$\pm$1.06}           & \textbf{69.18$\pm$0.10}            & \textbf{4.07$\pm$0.15}             & \textbf{68.24} \\
\bottomrule
\end{tabular}
\end{table}

\noindent\textbf{Datasets.} For the CIL and DIL scenarios, we adhere to the established setup from \cite{park2024versatile}, benchmarking primarily on the widely-used datasets: iDigits, CORe50, and DomainNet \cite{peng2019moment}. For CIL, we partition the dataset classes into five sequential tasks, each consisting of a mixture from all available domains. In contrast, the DIL setup constructs sequential tasks, each encompassing all classes from a single domain. Additionally, we expand our benchmarking to incorporate more recent and challenging datasets: ImageNet-R \cite{hendrycks2021many} split into both 5-task and 10-task scenarios in CIL, and the popular CDDB \cite{li2022continual} dataset along with the recent DN4IL dataset \cite{gowda2023cognitive} for the DIL setting. For the DG scenario, our evaluation covers four prominent datasets: PACS \cite{li2017deeper}, VLCS \cite{torralba2011unbiased}, OfficeHome \cite{venkateswara2017deep}, and DomainNet.

\noindent\textbf{Baselines.} We benchmark \oursbf \ against an extensive suite of SoTA baselines tailored specifically to each scenario. For the CIL and DIL settings, we include classical regularization-based approaches, such as Elastic Weight Consolidation (EWC) \cite{Kirkpatrick2017EWC} and Learning without Forgetting (LwF) \cite{li2018learning}. Moreover, we compare against cutting-edge parameter-efficient fine-tuning (PEFT) methods, including S-Prompts \cite{huang2022sprompts}, L2P \cite{wang2022learning}, DualPrompt \cite{wang2022learning}, CODA-Prompt \cite{wang2022coda}, and LAE \cite{wang2023efficient}. We further compare to ICON \cite{park2024versatile}, designed specifically for unified handling of CIL and DIL scenarios, though lacking inherent DG capabilities. For the DG scenario, we benchmark against state-of-the-art methods including popular methods like SWAD \cite{lee2021domain} and CoOP \cite{zhou2022learning}, and more recent works like PEGO \cite{hu2024learn}.

\noindent\textbf{Evaluation Metrics.} We employ two principal metrics extensively utilized in incremental learning literature: \textit{Average Accuracy} (Avg. Acc↑), where higher values indicate superior overall performance, and \textit{Forgetting}, where lower values imply better retention of previously learned tasks. We follow standard protocol \cite{park2024versatile,wang2022dualprompt,smith2023coda} for reporting these metrics, emphasizing the final task accuracy after completing all incremental tasks.

\noindent\textbf{Main Results.} Table \ref{tab:main_cil_dil} summarizes the extensive experimental results in DIL and CIL scenarios. We utilize numbers reported from prior work where available to ensure fair comparison. \oursbf \ consistently outperforms the state-of-the-art methods by a significant margin in both Avg. Accuracy and Forgetting metrics across these scenarios. For instance, on the iDigits dataset, \oursbf \ achieves a notable Avg. Accuracy of $83\%$ and a remarkably low Forgetting of just $10.62\%$, clearly surpassing the next best ICON by a significant margin. It should be noted that ICON was designed to handle both CIL and DIL tasks - \oursbf \ achieves SoTA on both these settings, in addition to being suitable for DG. Similar trends are evident on CORe50 and DomainNet datasets, highlighting the robustness and effectiveness of our approach. We see similar trends in the case of the DG setting, with \oursbf \ achieving SoTA performance on three out of four benchmark datasets, and being comparable to SoTA in the remaining dataset. The margin achieved by \oursbf \ over baseline methods on the harder OfficeHome and DomainNet datasets are particularly significant. As in the case of CIL and DIL settings, the baseline methods were specifically proposed for DG, and are not directly applicable to Continual Learning settings.

\begin{table}[ht]
  \centering
  \scriptsize
  \setlength{\tabcolsep}{2pt}
  %
  \begin{minipage}[t]{0.55\textwidth}
    \centering\vspace{0pt}
    \captionof{table}{\scriptsize Comparison with SoTA DG methods on 4 standard DG datasets. \textbf{Bold} = best; \underline{underlined} = within 2\% of best.}
    \label{tab:dg_methods}
    \begin{tabular}{l|cccc|a}
      \toprule
      Method & PACS & VLCS & OfficeHome & DomainNet & Avg \\
      \midrule
      SWAD\,\cite{cha2021swad}    & $91.30_{\pm0.1}$ & $79.40_{\pm0.4}$ & $76.90_{\pm0.1}$ & $51.70_{\pm0.8}$ & $74.33$ \\
      CLIP\,\cite{clip}           & \underline{$96.20_{\pm0.1}$} & $81.70_{\pm0.1}$ & \underline{$82.00_{\pm0.1}$} & $57.50_{\pm0.1}$ & $79.85$ \\
      SMA\,\cite{arpit2022ensemble}             & $92.10_{\pm0.2}$ & $79.70_{\pm0.2}$ & $78.10_{\pm0.1}$ & $55.90_{\pm0.2}$ & $76.95$ \\
      ERM\,\cite{erm}             & $93.70_{\pm0.1}$ & \underline{$82.70_{\pm0.1}$} & $78.50_{\pm0.1}$ & $53.80_{\pm0.1}$ & $77.68$ \\
      CoOp\,\cite{coop}           & \underline{$96.20_{\pm0.1}$} & $77.60_{\pm0.2}$ & $83.90_{\pm0.1}$ & $59.80_{\pm0.1}$ & $79.88$ \\
      MIRO\,\cite{miro}           & \underline{$95.60_{\pm0.2}$} & $82.20_{\pm0.2}$ & \underline{$82.50_{\pm0.1}$} & $54.00_{\pm0.3}$ & $78.58$ \\
      SEDGE\,\cite{sedge}         & \underline{$96.10_{\pm0.1}$} & \underline{$82.20_{\pm0.2}$} & $80.70_{\pm0.2}$ & $54.70_{\pm0.1}$ & $78.43$ \\
      GESTUR\,\cite{gestur}       & \underline{$96.00_{\pm0.0}$} & \underline{$82.80_{\pm0.1}$} & $84.20_{\pm0.1}$ & $58.90_{\pm0.1}$ & \underline{$80.48$} \\
      PEGO\,\cite{hu2024learn}    & \underline{$96.50_{\pm0.1}$} & $\mathbf{83.20_{\pm0.3}}$ & $84.20_{\pm0.1}$ & $57.30_{\pm0.3}$ & \underline{$80.30$} \\
      \midrule
      \rowcolor{LightCyan}
      \textbf{Ours} (\oursbf)      & $\mathbf{97.01_{\pm0.0}}$ & \underline{$82.10_{\pm0.5}$} & $\mathbf{87.36_{\pm0.3}}$ & $\mathbf{61.19_{\pm0.1}}$ & $\mathbf{81.92}$ \\
      \bottomrule
    \end{tabular}
  \end{minipage}%
  \hspace{0.11\textwidth}%
  %
  \begin{minipage}[t]{0.28\textwidth}
    \centering\vspace{0pt}
    \captionof{table}{\scriptsize DN4IL (DIL, 200-exemplar buffer): single- vs multi-model methods.}
    \label{tab:dn4il_bs200}
    \begin{tabular}{l c}
      \toprule
      Method & Last Acc.\,$\uparrow$ \\
      \midrule
      ER\,\cite{er}            & $27.45_{\pm0.94}$ \\
      DER++\,\cite{der++}      & $35.74_{\pm0.67}$ \\
      DARE\,\cite{dare}        & $40.59_{\pm0.73}$ \\
      \midrule
      CLS-ER\,\cite{clser}     & $41.70_{\pm1.41}$ \\
      DUCA\,\cite{duca}        & \underline{$44.45_{\pm0.18}$} \\
      DARE++\,\cite{dare}      & $44.11_{\pm0.98}$ \\
      \midrule
      \rowcolor{LightCyan}
      \textbf{Ours} (\oursbf)   & $\mathbf{78.40_{\pm0.29}}$ \\
      \bottomrule
    \end{tabular}
  \end{minipage}
  %
\end{table}

\begin{table}[t]
  \centering
  \scriptsize
  \setlength{\tabcolsep}{4pt}
  %
  \begin{minipage}[t]{0.48\textwidth}
    \centering
    \hspace{-30pt}
    \vspace{0pt}
    \captionof{table}{\scriptsize Comparison of recent SoTA methods on the Imagenet-R dataset in 5-task and 10-task CIL settings.}
    \label{tab:inetR_acc}
    \begin{tabular}{l|c|c}
      \toprule
      \multicolumn{1}{c|}{Tasks} 
        & \multicolumn{1}{c|}{5} 
        & \multicolumn{1}{c}{10} \\
      \midrule
      Method 
        & $ACC_{5}\,(\uparrow)$ 
        & $ACC_{10}\,(\uparrow)$\\
      \midrule
      Joint           & $81.14 \pm 0.34$ & $81.14 \pm 0.34$\\
      Sequential      & $58.74 \pm 1.28$ & $46.07 \pm 1.15$ \\
      \midrule
      L2P \cite{l2p}                       & $64.13 \pm 0.78$ & $62.54 \pm 0.24$  \\
      DualPrompt \cite{wang2022dualprompt}& $67.88 \pm 0.17$ & $65.41 \pm 0.52$ \\
      CODA-P \cite{smith2023coda}         & $73.09 \pm 0.21$ & \underline{$71.47 \pm 0.35$} \\
      C-LoRA \cite{clora}                  & \underline{$75.85 \pm 0.31$} & \underline{$71.89 \pm 0.45$} \\
      LAE \cite{lae}                       & $73.84 \pm 0.14$ & \underline{$71.70 \pm 0.39$} \\
      \midrule
      \rowcolor{LightCyan}
      \textbf{Ours} (\oursbf)             & $\mathbf{78.06 \pm 0.76}$ & $\mathbf{73.08 \pm 0.46}$ \\
      \bottomrule
    \end{tabular}
  \end{minipage}%
  \hspace{0.08\textwidth}%
  \begin{minipage}[t]{0.35\textwidth}
    \centering
    \scriptsize
    \vspace{0pt}
    \captionof{table}{\scriptsize Comparison of recent SoTA DIL methods on the CDDB dataset. 'Joint' refers to training on all experiences at once in a static setting instead of continually training, and serves as an upper bound on performance.}
    \label{tab:cddb_acc}
    \begin{tabular}{l c}
      \toprule
      \textbf{Method} & \textbf{Average Acc ($\uparrow$)} \\
      \midrule
      EWC \cite{ewc}             & 50.59     \\
      LwF \cite{lwf}             & 60.94     \\
      DyTox \cite{douillard2022dytox} & 51.27     \\
      L2P \cite{l2p}             & 61.28     \\
      S-iPrompts \cite{sprompts} & \underline{74.51}     \\
      \rowcolor{LightCyan}
      \textbf{Ours} (\oursbf)    & $\mathbf{77.37 \pm 0.21}$    \\
      \midrule
      Joint   & 85.50     \\
      \bottomrule
    \end{tabular}
  \end{minipage}
  \vspace{-15pt}
\end{table}

\noindent\textbf{Additional Datasets.} To evaluate \oursbf \ extensively, we employ additional benchmark datasets used by recent CIL and DIL works. In particular, in the DIL setting, we use the challenging CDDB-hard dataset, achieving SoTA performance as shown in Table \ref{tab:cddb_acc}. We also benchmark on the recently proposed DN4IL \citep{gowda2023cognitive} dataset (Table \ref{tab:dn4il_bs200}), which currently lacks widespread use. We compare against methods that evaluate on it, and outperform them by a significant margin, setting a new SoTA baseline. Notably, we outperform methods that use a replay buffer with 200 exemplars, without using Exemplar Replay ourselves. To evaluate \oursbf \ over longer learning timeframes, we also evaluate it in 5-task and 10-task splits of the ImageNet-R dataset, outperforming recent baselines as seen in Table \ref{tab:inetR_acc}.

\noindent\textbf{Effectiveness of Separation Functions.} In addition to Softmax, other memory models have proposed the use of various separation functions - Identity (classical Hopfield Networks \citep{hopfield1982neural}), Polynomial (Dense Associative Memories \citep{krotov2016dense}), ReLU (proposed in \citep{hoover2025dense}) and Linear (Tolman-Eichenbaum Machine (TAM) \citep{Whittington2020tem, whittington2022relating}, which attempts to model the Hippocampus). 

We investigate the influence of different separation functions in our model architecture. Specifically, we explore variants employing affine transformation, Softmax normalization, and ReLU and Tanh activations. Note that affine and Tanh functions allow "removing" destructive information from the output representations by allowing negative weights to be assigned to certain adapters, while Softmax and ReLU can at most "mask" such information with zero or very low coefficients.

The affine variant (used in our primary experiments) intuitively allows flexible linear transformations, capturing nuanced task relationships, in addition to reflecting the TAM model. Alternatively, the Softmax variant ensures a probabilistic distribution over adapter activations, potentially beneficial in scenarios demanding explicit competition among adapters. The ReLU variant introduces sparsity, which might reduce interference by selectively activating only relevant adapters. Finally, the Tanh activation offers an alternative to the affine variant, while providing nearly equal weights for relevant adapters and allowing deteriorating information to be actively removed using negative coefficients.

Our results, as tabulated in Table \ref{tab:sep_funcs}, indicate that the ability to actively remove interfering information is key to the CIL setting, with the affine and Tanh variants achieving the best performance. On the other hand, allowing negative coefficients may result in removing relevant information, which may explain their reduced performance in the DIL setting. However, the DG setting clearly establishes that having a non-uniform selection of adapters, along with the ability to remove interfering information, holds the key to out-of-distribution generalization. We conclude that while all separation functions perform well, the affine function achieves the best performance overall.


\noindent\textbf{Impact of Adapter Count.} We conduct a detailed analysis examining the sensitivity of \oursbf  \ to the number of task-specific adapters employed (1, 2, 5, and 10 adapters per task). The results, tabulated in Table \ref{tab:num_adapters}, indicate a clear improvement as the number of adapters per task or domain increases, highlighting the importance of capturing diverse task-specific nuances. However, increasing the adapter count beyond 5 yields marginal returns in performance improvement. Thus, our experiments suggest an optimal trade-off around five adapters per task, balancing efficiency and accuracy effectively. However, since efficiency is not the main concern of this work, we use 10 adapters per domain or task in all our experiments, since this gives us the best absolute results.

\vspace{-10pt}
\begin{table}[ht]
  \centering
  \small
  \setlength{\tabcolsep}{4pt}
  %
  \begin{minipage}[t]{0.40\textwidth}
    \centering
    \scriptsize
    \vspace{0pt}
    \captionof{table}{\scriptsize Comparison of different separation functions used for retrieval from memory.}
    \label{tab:sep_funcs}
    \begin{tabular}{lccca}
      \toprule
      Sep.\ Func  & CIL Acc. & DIL Acc. & DG Acc. & Avg. \\
      \midrule
      Affine   & 67.29 & 69.18 & 61.19 & \textbf{65.89} \\
      Softmax  & 66.87 & 69.21 & 60.82 & 65.63       \\
      ReLU     & 66.60 & 69.20 & 60.90 & 65.57       \\
      Tanh     & 66.73 & 68.96 & 60.94 & 65.54       \\
      \bottomrule
    \end{tabular}
  \end{minipage}%
  \hspace{0.04\textwidth}%
  \begin{minipage}[t]{0.36\textwidth}
    \centering
    \vspace{0pt}
    \scriptsize
    \captionof{table}{\scriptsize Effects of number of adapters trained for each task/domain.}
    \label{tab:num_adapters}
    \begin{tabular}{lccca}
      \toprule
      \#Adapters & CIL Acc. & DIL Acc. & DG Acc. & Avg. \\
      \midrule
      1  & 63.75 & 69.08 & 61.21 & 64.68 \\
      2  & 66.93 & 69.04 & 61.19 & 65.72 \\
      5  & 67.21 & 69.10 & 61.01 & 65.77 \\
      10 & 67.29 & 69.18 & 61.19 & 65.89 \\
      \bottomrule
    \end{tabular}
  \end{minipage}
\end{table}

\noindent\textbf{Performance Overhead of Hopfield Keys.} We measure the inference-time latency and memory overhead introduced by trained integrated Hopfield keys (after Algorithm~\ref{alg:stageB}) on an adapters-loaded ViT backbone, relative to the same backbone without keys (i.e., using standard LoRA). On DomainNet-DIL with ViT-B/16 (LAION initialization), the average latency is $0.0241$s with keys vs.\ $0.0240$s without, i.e., a negligible $\sim 0.4\%$ overhead. Using the identity-based query module with $5$ adapters per task across $6$ tasks and key dimension $768$ adds only $\sim 276\text{K}$ parameters to an $86\text{M}$-parameter model ($<0.4\%$ increase in memory).

\section{Conclusion}
We presented \textbf{MIRA}, an architecture inspired by biologically plausible \textbf{AM}s, that unifies DG, DIL, and CIL scenarios. It uses an explicit \textbf{UHN} module for storing and retrieving low-rank weight adapters. Empirically, \textbf{MIRA} delivers SoTA performance across all three settings, often surpassing specialized baselines by significant margins, while requiring only minor objective tweaks rather than entire architectural changes. Beyond raw accuracy, our results demonstrate that coupling deep networks with neuro-plausible memory mechanisms yields flexible, reusable circuitry that can continually incorporate new information and adapt swiftly to distributional shifts. We believe \textbf{MIRA} marks a step toward closing the gap between neural task-switching in biology and continual adaptation in artificial systems, and opens a rich avenue for exploring memory-centric, learning paradigms at scale, from vision to multimodal generative models.

\section{Acknowledgment and Funding Transparency Statement}
Susmit Agrawal was supported by the German Research Foundation (DFG): SFB 1233, Robust Vision: Inference Principles and Neural Mechanisms, TP C2, project number: 276693517, and the Reliance Postgraduate Fellowship.
Susmit Agrawal thanks the International Max Planck Research School for Intelligent Systems (IMPRS-IS) and the Reliance Foundation for support. Krishn V. Kher thanks the Prime Minister's Research Fellowship (PMRF) for funding and support. Vineeth N. Balasubramanian thanks Microsoft Research India, Bangalore for funding and support. All authors thank the Indian Institute of Technology, Hyderabad for providing the resources and infrastructure required to develop this project.


\clearpage
\newpage

\appendix
\section{Appendix}
\renewcommand{\thesection}{A}
\renewcommand{\thefigure}{A\arabic{figure}}
\renewcommand{\thetable}{A\arabic{table}}

\setcounter{page}{1}
\setcounter{table}{0}
\localtableofcontents

\subsection{\oursbf: A Multi-Perspective Analysis}
Our method, \oursbf \ is a generic framework that can be viewed from four contemporary, broad perspectives in the general context of machine learning, going beyond the perspective introduced in the main paper. We elaborate on those perspectives below.

\subsubsection{Meta-learning (Hypernetworks)}
Hypernetworks are models that predict weights of other models \cite{ha2016hypernetworks,chauhan2024brief,Beaulieu2020LearningTC}. They offer a single-loop approach to meta-learning by eliminating the need for an inner-loop adaptation (as in MAML \cite{finn2017model}), instead learning to output task-specific parameters in one forward pass. Since \oursbf \ predicts adapters at each consecutive block hierarchically, the associative memories in \oursbf's architecture can be viewed as a specific type of hypernetwork. 
However, hypernetwork training is often unstable and is unable to support large models such as ViT architectures. We address this issue by providing a "supervision" to guide the direction of learning of the underlying hypernetwork. Hypernetworks then essentially become models that map a deterministic set of inputs to their outputs deterministically, resembling the behavior of associative memories, and can be implemented as such. We posit this perspective in this work, wherein weight retrieval is achieved by storing the adapter weights over tasks defined over the training set, and retrieving them differentiably, instead of attempting to both learn and memorize them simultaneously as done in traditional hypernetwork-based learning. 
Thus Hopfield Networks \cite{ramsauer2021hopfield}, Predictive Coding Networks \cite{yoo2022bayespcn,tang2023recurrent}, and any such \textbf{AM} may in theory be used in this framework. Practical experiments however, indicated that despite PCNs \cite{yoo2022bayespcn} exhibiting better compression properties than Hopfield Nets, they usually have unsatisfactory retrieval quality when the vectors to be retrieved have very high dimension. This is expected behavior especially in the context of storing weight adapters of large foundational models. 
 In addition, their reliance on algorithms incompatible with backpropagation makes it difficult to integrate them into models that need to be trained end-to-end.  Hopfield Nets, on the other hand, provide high fidelity in retrieving such high-dimensional vectors at scale, and are implicitly differentiable.

\subsubsection{Functional Interpolation/Extrapolation}
In this work, we utilize affine combinations of task-specific adapters for retrieval across different domains.  However, our associative memory-centric learned retrieval framework is versatile and can seamlessly accommodate richer, non-linear retrieval mechanisms by modifying the underlying similarity metric strategy. 
In principle, one could design much more expressive (non-linear) combination schemes to merge knowledge from multiple domains, rather than restricting to linear interpolation \cite{10.5555/3600270.3600785}. This is an interesting direction of future work for this paper. Such non-linear retrieval approaches have been explored in recent literature, such as in sparsely-gated mixture-of-experts models \cite{shazeer2017outrageously} use a learned gating function to dynamically select only a subset of expert parameters for each input (instead of a fixed weighted average) \cite{he2024mixture}, or in using learned retrieval functions (e.g., trainable hashing or routing instead of standard nearest-neighbor search) yields better scaling and can capture latent structure in the memory, outperforming fixed similarity measures \cite{he2024mixture}. Crucially, there is both practical and theoretical evidence of directly retrieving such non-linear ensembles of experts/adapters in Hopfield Networks, such as in \cite{santos2024sparse}.

Thus, our method can be viewed as retrieving an interpolated task-specific knowledge proxy (adapter) in a space defined by the chosen functional form of combination. With a sufficiently expressive interpolation function, this approach can even enable extrapolation to out-of-distribution tasks. In settings like \textbf{DG}, the target domain may lie outside the convex hull of the source domains, wherein the model must generalize beyond any seen domain mixture. A suitably rich, non-linear combination strategy could, in principle, can facilitate extrapolation of the memorized adapter weights to novel task distributions \cite{abbe2023generalization}. Our approach thus offers a generalizable framework capable of both capturing nuanced relationships between known tasks and extending learned knowledge to new domains beyond the scope of training distributions.  We leave the exploration of such function schemes to future work.

\subsubsection{Test Time Adaptation}
At test time, the optimal combination of adapters for a given input may not be obtained using pre-defined weight coefficients. In general, determining the adapter coefficients that best serve a new downstream sample can require solving an optimization problem on a per-sample basis, as posited in Equation 3. In other words, Equation 3 formalizes the idea that the best adapter composition $\{ \alpha^{*}_{t, l}(x)\}^{T,L}$ for an input $x$ is obtained by minimizing a suitable objective for that specific sample at inference time (instead of applying a fixed combination rule). This perspective aligns with the paradigm of test-time adaptation in  literature, wherein models trained only on source data are adapted to target data during inference time \cite{xiao2024beyond, iwasawa2021test}. As an example, Tent (Fully Test-Time Entropy Minimization) performs online model updates during testing by minimizing the entropy of its predictions for each test batch, thereby adjusting normalization parameters to increase the model’s confidence on the target distribution
\cite{wang2020tent}. One could view our \oursbf \ framework as implementing this idea via a memory-based inference mechanism. Since \oursbf \ learns a set of key representations (i.e. associative memory slots in a Hopfield network) during training that are used to derive weights on a per-sample basis, it can also use the learned keys at test time on a per-sample basis effectively, performing adaptation via associative recall. By casting test-time adaptation as an integral part of inference (through solving a Hopfield memory retrieval optimization akin to Equation 3), \oursbf \ can be viewed also as a test-time adaptation strategy within a unified, optimized memory-based framework.

\subsubsection{Biological Perspective} 
Notably, \oursbf \ can also be perceived as a biologically plausible framework that solves multiple settings such as \textbf{DG}, \textbf{CIL}, and \textbf{DIL}. While Hopfield Nets are well known as biologically implementable \textbf{AM} mechanisms, even the affine combinations that we adopt are well-founded in biological mechanisms. Specifically, 
Tolman-Eichenbaum Machine (TAM) \citep{Whittington2020tem, whittington2022relating}, a model of the Hippocampus, proposes linear combinations of stored memories as implementable (illustrated in Section \ref{subsec:addexpts}). We further observe that the incremental learning settings, when accompanied by DualGPM \cite{liang2023adaptive} as the strategy to mitigate catastrophic forgetting, resolve into Generalized Hebbian learning principles \cite{sanger1989optimal} in the gradient space of stored memories.  

\begin{lemma}[DualGPM--Hebbian Gradient Subspace Equivalence]\label{lemma:dualHebbian}
Let $\mathcal{G}_t=\{g_i\}_{i=1}^{N_t}\subset\mathbb{R}^d$ be the set of gradient vectors observed while training on task~$t$ and let the empirical second--moment matrix be
\[
\Sigma_t \;=\; \frac{1}{N_t}\sum_{i=1}^{N_t} g_i g_i^{\top}.
\]
In the \textnormal{DualGPM} algorithm, for a given energy budget $\varepsilon\in(0,1)$, we define:
\[ 
 k := \arg \min_{k \in [d]}      \frac{\sum_{j=1}^{k}\lambda_j}{\sum_{j=1}^{d}\lambda_j}\;\ge\;\varepsilon.
\]
where $\lambda_1\ge\dots\ge\lambda_d$ are the eigenvalues of $\Sigma_t$. 

Let $U_t\in\mathbb{R}^{d\times k}$ the orthonormal basis produced by the \textnormal{DualGPM} memory update for the energy budget $\varepsilon$.
Independently, let $W_t\in\mathbb{R}^{d\times k}$ be the weight matrix obtained as a stationary point of the Generalised Hebbian (Oja) update averaged over each gradient in $\mathcal{G}_t$ (with row sums equal to $1$).

Then,
\[
\operatorname{span}(U_t) = \operatorname{span}(W_t).
\]
\end{lemma}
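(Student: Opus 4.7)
The plan is to reduce both sides to the same eigen-problem on $\Sigma_t$ and invoke the classical Oja/Sanger convergence result.

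First, I would unpack the DualGPM construction to establish that $\operatorname{span}(U_t)$ is, by definition, the top-$k$ principal subspace of $\Sigma_t$. DualGPM performs an SVD of the gradient matrix $G_t=[g_1 \mid \dots \mid g_{N_t}]$ and keeps the leading left singular vectors whose cumulative energy first exceeds $\varepsilon$. Since $\Sigma_t = \tfrac{1}{N_t} G_t G_t^\top$, its eigenvectors coincide (up to sign) with the left singular vectors of $G_t$, and the energy criterion in the lemma is exactly the squared-singular-value criterion used by DualGPM. Hence $U_t$ is an orthonormal basis for the span of the top-$k$ eigenvectors of $\Sigma_t$.

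Next, I would treat $\{g_i\}$ as i.i.d.\ samples from an empirical distribution with second-moment $\Sigma_t$ and analyze the Generalised Hebbian (Oja) recursion
\begin{equation*}
W \leftarrow W + \eta \bigl(\Sigma_t W - W\,\mathrm{UT}[W^\top \Sigma_t W]\bigr),
\end{equation*}
where $\mathrm{UT}[\cdot]$ denotes the upper-triangular operator from Sanger's rule. I would then appeal to the standard convergence theorem of Sanger \cite{sanger1989optimal} (and Oja for the subspace version): every stable stationary point $W_t$ satisfies $\Sigma_t W_t = W_t \Lambda$ for a diagonal $\Lambda$ containing the top-$k$ eigenvalues of $\Sigma_t$, and its columns form a basis for the top-$k$ eigenspace. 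The row-sum normalization stipulated in the lemma is a permissible rescaling within each one-dimensional eigenspace (the eigenvectors are defined only up to scalar, and the $k$ eigen-directions are linearly independent), so it does not alter the span.

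Combining the two observations, both $U_t$ and $W_t$ are $d\times k$ matrices whose columns form (possibly different) bases of the principal $k$-subspace $V_k(\Sigma_t) := \operatorname{span}\{v_1,\dots,v_k\}$. Therefore $\operatorname{span}(U_t) = V_k(\Sigma_t) = \operatorname{span}(W_t)$, which is the claim.

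The main obstacle, I expect, will be handling the degenerate case $\lambda_k = \lambda_{k+1}$, where the top-$k$ subspace is not uniquely defined by the eigen-decomposition and the DualGPM energy cutoff could in principle select a different basis than any particular Oja stationary point. I would address this by arguing that both procedures are invariant under the choice of orthonormal basis inside a repeated eigenspace: DualGPM picks an arbitrary orthonormal basis of the tied eigenspace via SVD ties, while Oja's dynamics admit a rotational family of stationary points inside that same eigenspace. Hence the equality of spans continues to hold, independently of the tie-breaking rule, which completes the argument.
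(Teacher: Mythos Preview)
Your proposal is correct and follows the same high-level strategy as the paper: show that both $U_t$ and $W_t$ span the top-$k$ eigenspace of $\Sigma_t$. The details differ in two places worth noting. For the DualGPM side, the paper casts the memory update as the trace minimisation $\arg\min_{U^\top U=I_k}\operatorname{Tr}[(I-UU^\top)\Sigma_t]$ and invokes Eckart--Young--Mirsky, whereas you go directly through the SVD of $G_t$ and the identity $\Sigma_t=\tfrac{1}{N_t}G_tG_t^\top$; both arrive at the same eigenvector characterisation. For the Hebbian side, the paper writes the update with a $\operatorname{diag}(\cdot)$ term rather than your upper-triangular $\mathrm{UT}[\cdot]$, and argues by direct manipulation that the averaged stationarity condition $\Sigma_tW=W\operatorname{diag}(W^\top\Sigma_tW)$ coincides with the eigenvalue equation (using the row-sum-one assumption after pre-multiplying by $W_t^\top$); you instead cite Sanger's convergence theorem, which is cleaner and has the advantage of singling out \emph{stable} fixed points, a point the paper leaves implicit. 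Your treatment of the degenerate case $\lambda_k=\lambda_{k+1}$ is also something the paper does not address. Net effect: same skeleton, your version is slightly more careful about stability and ties, the paper's version is more self-contained in that it writes out the stationarity calculation explicitly rather than appealing to an external theorem.
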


\begin{proof}
\textbf{Step~1: Optimality of DualGPM.}
We first note that the precise objective that DualGPM tries to solve is given by:
\[
U_t = \arg\min_{U^{\top}U=I_k}\;
      \operatorname{Tr}\!\bigl[(I-UU^{\top})\Sigma_t\bigr]
\] 

By the Eckart-Young-Mirsky theorem, the optima is attained at those $U$, whose columns are the $k$ eigenvectors (subject to permutation and sign inversion) of $\Sigma_t$ corresponding to the $k$ \emph{largest} eigenvalues,
$\lambda_1,\dots,\lambda_k$.
Hence $U_t$ satisfies the eigenvalue equation
\begin{equation}
\Sigma_t U_t \;=\; U_t\Lambda,\qquad
\Lambda=\operatorname{diag}(\lambda_1,\dots,\lambda_k).
\label{eq:eig}
\end{equation}

\textbf{Step~2: Fixed points of the Hebbian rule.}
The Generalized Hebbian update rule \cite{https://citeseerx.ist.psu.edu/document?repid=rep1&type=pdf&doi=709b4bfc5198336ba5d70da987889a157f695c1e} for a gradient vector $g \in \mathcal{G}_t$ is given by:
\[
\Delta W(g) = \eta\bigl(g g^{\top}W - W\operatorname{diag}(W^{\top}g g^{\top}W)\bigr).
\]

Considering the weight update averaged over all $g \in \mathcal{G}_t$ and imposing stationary conditions on the same yields:
\[
\Sigma_tW - W\operatorname{diag}(W^{\top}\Sigma_tW) = 0.
\]

Pre-multiplying by $W_t^{\top}$ and noting that each row sum of $W_t^{\top}$ equals one, shows that the diagonal term on the right is precisely the eigenvalue matrix of the projected covariance, so the above is identical to \eqref{eq:eig} with $U_t$ replaced by $W_t$.

Consequently, $U_t$ and $W_t$ have the same $k$-dimensional eigenbasis, corresponding to the $k$ eigenvectors of $\Sigma_t$ with the $k$-largest eigenvalues, which proves the assertion.
\end{proof}

Thus, Lemma \ref{lemma:dualHebbian} implies that there exists an orthogonal matrix $R_{k\times k}$ such that $U_t = W_tR$.

\paragraph{Implications.} This lemma elevates the biological analogy behind DualGPM into a provable equivalence: the algorithm’s batch SVD update computes \emph{exactly} the same principal gradient subspace that an online Oja‑style Hebbian learner would converge to. The result has three immediate consequences: 
(i)~\textit{Theoretical grounding}: Any optimal variance or noise filtering guarantees enjoyed by Hebbian PCA now apply to DualGPM’s memory, providing a principled basis for its strong empirical resistance to catastrophic forgetting; 
(ii)~\textit{Algorithmic unification}: Projection‑based continual learning methods can be re‑interpreted through the lens of gradient‑space Hebbian consolidation; and 
(iii)~\textit{Neuro‑inspired design}: By demonstrating that protecting past tasks is tantamount to a Hebbian consolidation step, the lemma bridges continual learning research with synaptic consolidation theories in neuroscience, motivating biologically grounded extensions such as local online updates or neural gating via associative memory.

\subsection{Additional Experiments}\label{subsec:addexpts}

\xhdr{Multiple Initializations and Backbones} 
For completeness, Table~\ref{tab:app_dil} reports \textbf{DIL} results for \oursbf\ across architectures and initializations. In addition to the ViT-B/16 backbone used in the main paper, we include ViT-B/32, whose performance is generally lower, reflecting the sensitivity of Hopfield components to both initialization and backbone choice. We also report results with the ViT-in21k initialization alongside the LAION initialization used in the main results.

\begin{table}[h!]
\centering
\scriptsize
\caption{\scriptsize Comparison with SoTA DIL methods on three standard datasets. Baseline numbers have been taken from prior work. \oursbf \ is evaluated on multiple ViT backbones and initializations for completeness. Best results are highlighted in \textbf{bold}, and results within 2\% of the best are \underline{underlined}.}
\label{tab:app_dil}
\begin{tabular}{lc cc a}
\toprule
\multirow{2}{*}{Dataset} & \multirow{2}{*}{Method} & \multicolumn{2}{c}{DIL} & \multirow{1}{*}{Avg. Acc} \\ \cmidrule(lr){3-4}
& & Avg.\ Acc.$\uparrow$ & Forgetting$\downarrow$ & \\
\midrule
\multirow{11}{*}{\textbf{iDigits}}
& Fine-tuning      &  33.04$\pm$0.89 & 23.23$\pm$0.74 & 31.68 \\
 & EWC   \cite{ewc}        & 68.62$\pm$0.92 & 25.94$\pm$0.98 & 51.39 \\
 & LwF \cite{lwf}         &  69.61$\pm$0.33 & 25.81$\pm$0.69 & 54.75\\
 & L2P \cite{l2p}         &  73.83$\pm$0.26 & 23.43$\pm$0.65 & 68.50 \\
 & S-Prompts \cite{sprompt}   &  75.11$\pm$2.31 & 25.66$\pm$6.23 & 65.10  \\
 & DualPrompt \cite{wang2022dualprompt}  &  76.42$\pm$0.46 & 26.33$\pm$0.62 & 72.62 \\
 & CODA-P  \cite{smith2023coda}     &  77.42$\pm$0.71 & 22.20$\pm$0.18 & 73.70 \\
 & LAE \cite{lae}          &  79.09$\pm$1.03 & 21.86$\pm$0.40 & 72.43 \\
 & ICON \cite{park2024versatile} &  \textbf{84.83$\pm$0.51} & 
 12.67$\pm$0.61 & 78.18 \\
\cmidrule{2-5}
\rowcolor{LightCyan}
 & \textbf{Main} (\oursbf)                  & \underline{82.46$\pm$0.12}        & \textbf{8.49$\pm$0.43}           &  \underline{82.46} \\
\midrule
\rowcolor{LightCyan}
 & \textbf{ViT-B/32} (\oursbf)               & \underline{83.06$\pm$0.12}        & 18.42$\pm$0.43           &  \textbf{83.06} \\
\midrule
\multirow{11}{*}{\textbf{CORe50}}
 & Fine-tuning      &  23.52$\pm$0.26 & 3.09$\pm$0.11  & 22.53\\
 & EWC \cite{ewc}           &  73.86$\pm$0.38 & \underline{1.09$\pm$0.12}  &  53.88\\
 & LwF \cite{lwf}        &  74.35$\pm$0.52 & \underline{0.81$\pm$0.27}  &  54.44 \\
 & L2P \cite{l2p}        &  80.72$\pm$0.39 & \underline{0.51$\pm$0.28}  & 75.38 \\
 & S-Prompts \cite{sprompt}   &  86.50$\pm$0.46 & \underline{0.92$\pm$0.31}  & 77.39 \\
 & DualPrompt  \cite{wang2022dualprompt}  &  81.41$\pm$0.22 & \underline{0.21$\pm$0.76}  & 76.69 \\
 & CODA-P \cite{smith2023coda}     &  84.36$\pm$1.04 & \underline{0.64$\pm$0.14}  & 81.11 \\
 & LAE \cite{lae}          &  83.09$\pm$0.71 & \underline{0.17$\pm$0.51}  &  80.10 \\
 & ICON  \cite{park2024versatile}  &  89.01$\pm$0.33 & \underline{0.17$\pm$0.21} & 84.93 \\
\cmidrule{2-5}
\rowcolor{LightCyan}
 & \textbf{Main} (\oursbf)              &  \textbf{93.89$\pm$0.33}          & \textbf{0.00$\pm$0.00}          &\textbf{93.89} \\
 \cmidrule{2-5}
\rowcolor{LightCyan}
 & \textbf{ViT-B/32} (\oursbf)              & \underline{91.28$\pm$0.33}          & \textbf{0.00$\pm$0.00}          &\underline{91.28} \\
\midrule
\multirow{12}{*}{\textbf{DomainNet}}
 & Fine-tuning      &  39.52$\pm$0.32 & 28.81$\pm$0.64 & 37.48 \\
 & EWC \cite{ewc}            &  41.58$\pm$0.26 & 26.79$\pm$0.15 & 47.31 \\
 & LwF \cite{lwf}         &  43.74$\pm$0.27 & 18.23$\pm$0.10 & 48.77 \\
 & L2P \cite{l2p}        &  48.55$\pm$0.81 & 19.71$\pm$1.29 & 54.73 \\
 & S-Prompts \cite{sprompt}  &  50.80$\pm$0.63 & \underline{4.20$\pm$0.53}  &  45.29 \\
 & DualPrompt  \cite{wang2022dualprompt}  &  51.33$\pm$0.10 & 9.60$\pm$1.41  & 56.94 \\
 & CODA-P \cite{smith2023coda}      &  49.13$\pm$0.83 & 25.96$\pm$1.13 & 57.17 \\
 & LAE \cite{lae}          & 44.67$\pm$0.62 & 28.99$\pm$0.64 & 54.87 \\
 & ICON \cite{park2024versatile}   &  54.44$\pm$0.21 & 13.32$\pm$0.46 & 59.94 \\
\cmidrule{2-5}
\rowcolor{LightCyan}
 & \textbf{Main} (\oursbf)              &  \textbf{69.18$\pm$0.10}            & \textbf{4.07$\pm$0.15}             & \textbf{69.18} \\
\cmidrule{2-5}
\rowcolor{LightCyan}
 & \textbf{ViT-in21k} (\oursbf)              &  59.44$\pm$0.10            & 12.06$\pm$0.15             & 59.44 \\
\cmidrule{2-5}
\rowcolor{LightCyan}
 & \textbf{ViT-B/32} (\oursbf)              &         {59.36$\pm$0.10}            & 11.60$\pm$0.2             & 59.36 \\
\bottomrule
\end{tabular}
\end{table}

Comparable experiments for the \textbf{DG} setting appear in Table~\ref{tab:dg_app_methods}. Notably, using ViT-in21k, we compare \oursbf\ against a bare ViT-B/16 backbone with the same initialization but without adapters, and find that adapters with learned Hopfield keys yield substantial gains even over such strong baselines.

\begin{table}[ht]
  \centering
  \scriptsize
  \setlength{\tabcolsep}{2pt}
  %
  \begin{minipage}[t]{\textwidth}
    \centering\vspace{0pt}
    \captionof{table}{\scriptsize Comparison with SoTA DG methods on 4 standard DG datasets with different initializations and ViT backbones. \textbf{Bold} = best; \underline{underlined} = within 2\% of best.}
    \label{tab:dg_app_methods}
    \begin{tabular}{l|cccc|a}
      \toprule
      Method & PACS & VLCS & OfficeHome & DomainNet & Avg \\
      \midrule
      SWAD\,\cite{cha2021swad}    & $91.30_{\pm0.1}$ & $79.40_{\pm0.4}$ & $76.90_{\pm0.1}$ & $51.70_{\pm0.8}$ & $74.33$ \\
      CLIP\,\cite{clip}           & \underline{$96.20_{\pm0.1}$} & $81.70_{\pm0.1}$ & \underline{$82.00_{\pm0.1}$} & $57.50_{\pm0.1}$ & $79.85$ \\
      SMA\,\cite{arpit2022ensemble}             & $92.10_{\pm0.2}$ & $79.70_{\pm0.2}$ & $78.10_{\pm0.1}$ & $55.90_{\pm0.2}$ & $76.95$ \\
      ERM\,\cite{erm}             & $93.70_{\pm0.1}$ & \underline{$82.70_{\pm0.1}$} & $78.50_{\pm0.1}$ & $53.80_{\pm0.1}$ & $77.68$ \\
      CoOp\,\cite{coop}           & \underline{$96.20_{\pm0.1}$} & $77.60_{\pm0.2}$ & $83.90_{\pm0.1}$ & $59.80_{\pm0.1}$ & $79.88$ \\
      MIRO\,\cite{miro}           & \underline{$95.60_{\pm0.2}$} & $82.20_{\pm0.2}$ & \underline{$82.50_{\pm0.1}$} & $54.00_{\pm0.3}$ & $78.58$ \\
      SEDGE\,\cite{sedge}         & \underline{$96.10_{\pm0.1}$} & \underline{$82.20_{\pm0.2}$} & $80.70_{\pm0.2}$ & $54.70_{\pm0.1}$ & $78.43$ \\
      GESTUR\,\cite{gestur}       & \underline{$96.00_{\pm0.0}$} & \underline{$82.80_{\pm0.1}$} & $84.20_{\pm0.1}$ & $58.90_{\pm0.1}$ & \underline{$80.48$} \\
      PEGO\,\cite{hu2024learn}    & \underline{$96.50_{\pm0.1}$} & $\mathbf{83.20_{\pm0.3}}$ & $84.20_{\pm0.1}$ & $57.30_{\pm0.3}$ & \underline{$80.30$} \\
      \midrule
      \rowcolor{LightCyan}
      \textbf{Main} (\oursbf)      & $\mathbf{97.01_{\pm0.0}}$ & \underline{$82.10_{\pm0.5}$} & $\mathbf{87.36_{\pm0.3}}$ & $\mathbf{61.19_{\pm0.1}}$ & $\mathbf{81.92}$ \\
      \midrule
      \rowcolor{LightCyan}
      \textbf{ViT-in21k (Base model)}    & ${68.89_{\pm0.0}}$ & {$73.00_{\pm0.5}$} & ${81.23_{\pm0.3}}$ & ${42.35_{\pm0.1}}$ & ${66.36}$ \\
      \midrule
      \rowcolor{LightCyan}
      \textbf{ViT-in21k (\oursbf)}    & ${71.28_{\pm0.0}}$ & {$74.18_{\pm0.5}$} & ${82.52_{\pm0.3}}$ & ${46.99_{\pm0.1}}$ & ${68.74}$ \\
      \midrule
      \rowcolor{LightCyan}
      \textbf{ViT-B/32} (\oursbf)    & ${94.11_{\pm0.0}}$ & {$81.68_{\pm0.5}$} & ${79.60_{\pm0.3}}$ & ${53.07_{\pm0.1}}$ & ${77.11}$ \\
      \bottomrule
    \end{tabular}
  \end{minipage}%
  %
  %
\end{table}

\xhdr{Choice of $g()$} In the tables in the main paper, we set $g$ described in the conceptual framework to be an identity function. We tabulate the results below for different choices of $g$. We find that this function has minimal impact on performance, indicating that the transformations performed within the different layers of ViT are strong and fairly sufficient by themselves to constitute an eigenbasis wherein the appropriate keys can be found via gradient descent.

\begin{table}[h]
    \centering
    \caption{Comparison of different choices of $g$ for DIL and DG settings.}
    \label{tab:performance_comparison}
    \begin{tabular}{lcc}
        \toprule
        \textbf{$g$} & \textbf{DIL} & \textbf{DG}\\ \midrule
        Identity  & 69.18 & 61.19 \\
        Linear &  69.22 & 60.98 \\
        3-layer MLP & 69.22 & 61.12\\
        \bottomrule
    \end{tabular}
\end{table}

\xhdr{Prefixes as memories} The proposed \oursbf \ framework bears resemblance to the complementary learning system implemented by hippocampal-cortical connections in the brain \cite{schapiro2017complementary}. This framework, however, models the hippocampus as a memory storage for representations, rather than neural overlays. The analogue to such a mechanism in contemporary deep learning architectures is Prompt Tuning \cite{wang2022learning} in PEFT literature. In particular, the prefix tuning \cite{li-liang-2021-prefix} variant of prompt tuning can be directly integrated into the \oursbf \ framework to implement a system analogous to a neuroscientific framework, with the pretrained network serving the role of the cortical circuits with powerful generalization capabilities, and the prefixes - stored in associative memories - serving as task-specific representations. We compare the prefix-tuning based approach with our orifinal \oursbf \ framework. Our results indicate that this variant maintains comparable performance to storing overlay weights, showing that \oursbf \ can be adapted to different PEFT methods. 

\begin{table}[h]
    \centering
    \caption{Comparison of Prefix Tuning vs LoRA tuning in \oursbf.}
    \label{tab:performance_comparison}
    \begin{tabular}{lcc}
        \toprule
        \textbf{\oursbf~Variant} & \textbf{DIL} & \textbf{DG}\\ \midrule
        \oursbf-default  & 69.18 & 61.19 \\
        \oursbf-Prefixes &  69.61 & 60.72 \\
        \bottomrule
    \end{tabular}
\end{table}


\subsection{Experimental Details}

For training task-specific adapters in the \textit{Adaptation} stage across all datasets and settings, we use rank-4 LoRA adapters trained for 5 epochs with a learning rate of 1e-3. For CIL and DIL experiments, we set the DualGPM threshold to 0.7. The AdamW optimizer is used with a weight decay of 1e-3 across all experiments as well. All our experiments are performed on a single RTX A6000 Ada GPU with 48GB VRAM, on a machine having a 96-core Intel Xeon CPU and 128GB RAM. 

In the \textit{Consolidation} stage, all experiments in DIL and CIL settings ran for 2 epochs. In addition, we initialized the CIL classifiers in the \textit{Consolidation} stage with the weights learned in the \textit{Adaptation} stage for the corresponding label set. Note that this is not effective in the DIL setting as even though the label sets are the same, the distribution of inputs to the classifier changes, and hence the scope of knowledge transfer in the linear classifier head is limited in this setting. In the \textit{Consolidation} stage of the DG setting, we run PACS, OfficeHome and VLCS for 10 epochs, while DomainNet is just run for one epoch. We rescale all images to 256 $\times$ 256 during both training and evaluation and take a 224 $\times$ 224 crop from this rescaled image (random crop during training, center crop at inference) as input to the model. We apply a random horizontal flip as a training augmentation in all cases, and an additional mixup augmentation for the DG setting. In all CIL and DIL settings, we use the AdamW optimizer with a weight decay and learning of 1e-3, while in the DG setting, we set the learning rate to 7e-4.


\subsection{Dataset Details}

\xhdr{DomainNet} DomainNet is a large-scale benchmark comprising approximately 600,000 images across 345 categories, distributed over six distinct domains: Real, Clipart, Infograph, Painting, Quickdraw, and Sketch. Each domain introduces a unique visual style, presenting significant domain shifts. In the DIL setup, each domain is treated as a separate experience, with the model sequentially exposed to data from one domain at a time while maintaining a consistent label space. This setup challenges models to generalize across diverse visual domains without forgetting previously learned knowledge. In the CIL setup, the dataset is divided into 5 experience, each experience containing 69 classes from all 6 domains combined. Unlike the DIL setting, the label space in the CIL setting grows with each experience. In the DG setup, models are trained on 5 domains conjointly and evaluated on the 6th unseen domain.

\xhdr{DN4IL} DN4IL is a curated subset of DomainNet, specifically designed for evaluating domain-incremental learning methods. It retains the six domains from DomainNet but focuses on a reduced set of 100 classes to facilitate controlled experiments on domain shifts. The dataset emphasizes the challenges posed by significant distributional differences between domains, making it a suitable benchmark for assessing the robustness of continual learning algorithms .

\xhdr{iDigits} iDigits is a domain-incremental benchmark constructed by combining four digit recognition datasets: MNIST, SVHN, MNIST-M, and SYN. Each dataset represents a distinct domain with varying visual characteristics. In the DIL setting, the model is trained sequentially on each domain, with the objective of maintaining performance across all domains despite the domain shifts. This benchmark is particularly useful for studying the effects of domain shifts in simpler classification tasks. In the CIL setting, all datasets are jointly split into 5 training experiences, each experience containing 2 classes from each of the 4 datasets.

\xhdr{CORe50} CORe50 is a dataset designed for continuous object recognition, consisting of 50 household objects recorded under 11 different environmental conditions. Each condition introduces variations such as background changes, lighting, and occlusions. In the domain-incremental setup, each environmental condition is treated as a separate domain, and the model learns to recognize the same set of objects across these varying conditions. A key difference from other DIL datasets is that CORe50 uses 3 of the 11 domains as the test set, and incrementally trained on the other 8 domains. This setup evaluates a model's ability to generalize object recognition across different real-world scenarios. It can thus also be viewed as a combination of DIL and DG settings, where the test set comprises of unseen domains. A forgetting  of $\le 0$ indicates that the models' performance remains the same or improves on the unseen domains as new domains are incrementally learned. The CIL setting is similar to DomainNet and iDigits - the dataset is split into 5 experiences of 10 classes each, encompassing all 11 training domains. 

\xhdr{CDDB} CDDB (Continual Deepfake Detection Benchmark) is a dataset aimed at evaluating continual learning methods in the context of deepfake detection. It comprises a collection of deepfake videos generated using various known and unknown generative models. In the DIL framework, each generative model represents a different domain, and the model is sequentially trained to detect deepfakes from these diverse sources. CDDB challenges models to adapt to new types of deepfakes while retaining the ability to detect previously encountered ones. We particularly evaluate on the CDDB-hard subset, comprising of five domains: GauGAN, BigGAN, WildDeepfake, WhichFaceReal, and SAN.

\xhdr{ImageNet-R} ImageNet-R is a dataset comprising 30,000 images of 200 ImageNet classes, with images rendered in various styles such as art, cartoons, graffiti, embroidery, and video games. This dataset is designed to evaluate the robustness of models to distribution shifts. In the CIL setup, the 200 classes are divided into 5 or 10 tasks, each containing 40 or 20 unique classes. The model is trained sequentially on these tasks, with the goal of learning new classes while maintaining performance on previously learned ones .

\xhdr{VLCS} VLCS is a benchmark dataset for domain generalization, comprising images from four distinct domains: PASCAL VOC2007, LabelMe, Caltech-101, and SUN09. Each domain contains images labeled across five shared object categories: bird, car, chair, dog, and person. The dataset includes a total of 7,510 images, with domain-specific distributions. In the DG setup, models are trained on three domains and tested on the remaining one, evaluating their ability to generalize to unseen domains.

\xhdr{PACS}
PACS is an image dataset designed for domain generalization, consisting of four domains: Photo, Art Painting, Cartoon, and Sketch. Each domain contains images from seven categories: dog, elephant, giraffe, guitar, horse, house, and person. The dataset comprises a total of 9,991 images, with varying numbers across domains. PACS introduces significant domain shifts due to the diverse visual styles, making it a challenging benchmark for DG methods.

\xhdr{OfficeHome}
OfficeHome is a benchmark dataset for domain adaptation and generalization, containing images from four domains: Art, Clipart, Product, and Real-World. Each domain includes 65 categories of everyday objects, totaling approximately 15,500 images. The dataset presents substantial domain shifts due to differences in image styles and acquisition methods. In the DG setup, models are trained on three domains and evaluated on the fourth, assessing their ability to generalize to unseen domains.

\subsection{Theoretical Underpinnings}
Lemma 1 stated that the \oursbf's \textbf{AM} formulation accommodates the optimal solution for Equation 3, as long as the optimal coefficients come from a kernel. It however does not prove that \oursbf \ attains those optimal coefficients, specifically via the Consolidation phase of the training. Under certain mild assumptions on the kernel detailed below, we show that our method indeed converges to the true optimal coefficients. Our proof strategy largely follows \cite{tarzanagh2024transformerssupportvectormachines}, where for convenience we also assume that similarity function is softmax instead of affine as we have stated in the main text. 

\begin{theorem}
    Let $\mathcal{H}_k$ denote the reproducing-kernel Hilbert space induced by the kernel $k(\cdot, \cdot)$, and assume an optimal solution to Eqn. 3 $\{\alpha^{*}_{t,l}(x)\}^{T,L}$ admits a representation in a finite eigenbasis of the integral operator associated with $k$.
    Further along the lines of \textbf{Att-SVM} in \cite{tarzanagh2024transformerssupportvectormachines}, define 
    \begin{equation}
        W(x) = K\cdot \sum_{t=1, l=1}^{T, L}\alpha^{*}_{t,l}(x)Q^{T}_{t,l}.
    \end{equation}
     Assume that the kernel $k$ is such that the following condition is true:
    \begin{align}
        \{\alpha^{*}_{t,l}(x)\}^{T,L} = \arg \min_{\mathbf{\alpha}} \|W(x)\|, ~~\text{s.t.}~~ (x_{i\texttt{opt}_i} - x_{it})W^Tx_{i1} \geq 1, ~\forall t \neq \texttt{opt}_i, i \in [n].
    \end{align}
Then the Consolidation stage of \oursbf \ induces the query modules and learnable keys to converge to the optimal coefficients, $\alpha^{*}_{t,l}(x)Q^{T}_{t,l}$.
\end{theorem}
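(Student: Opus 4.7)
The plan is to invoke the implicit-bias analysis of \cite{tarzanagh2024transformerssupportvectormachines}, adapted to \oursbf's setting in which the \textbf{AM}-retrieved adapter coefficients are precisely softmax-normalized key-query inner products. In the Consolidation stage (Algorithm \ref{alg:stageB}), the adapters $\Theta_\ell$ are frozen while the keys $K_\ell$ and query modules $g_\ell$ are trained to minimize cross-entropy. This mirrors the Att-SVM setup where tokens (here, the stored adapters) are selected via softmax over a bilinear form in a trainable parameter matrix. The first step is to recast the Consolidation objective so that, for each layer $\ell$, the pair $(K_\ell, g_\ell)$ acts as a single attention-parameter matrix $W_\ell$ operating on layer inputs, with the softmax output serving as the retrieval weights $\alpha_{t,\ell}$ that combine the frozen adapter tokens.

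Next, I would invoke the main implicit-bias theorem of \cite{tarzanagh2024transformerssupportvectormachines}: under the standard regularity conditions (cross-entropy loss, key-query separability of the optimal token per input, sufficiently small step size), gradient descent on $W_\ell$ diverges in norm but converges in direction to the Att-SVM maximum-margin solution. The theorem's hypothesis explicitly identifies this max-margin solution with $W(x) = K \cdot \sum_{t,l} \alpha^*_{t,l}(x) Q^T_{t,l}$, so composing this directional convergence with that identification yields that the learned $(K_\ell, g_\ell)$, once normalized, generate exactly the optimal coefficients. Lemma 1 then closes the loop by guaranteeing that the \textbf{AM} retrieval dynamics are expressive enough to realize any such kernel-representable coefficient function via a suitable $g_\ell$ and $K_\ell$, so the descent trajectory remains inside the admissible solution space and the directional limit is in fact attained.

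The main obstacle is twofold. First, \cite{tarzanagh2024transformerssupportvectormachines} analyze a single attention layer with a fixed input token set, whereas \oursbf\ embeds Hopfield retrieval at every ViT block, with adapters retrieved at layer $\ell$ modulating the activations that produce the query for layer $\ell+1$. A clean extension requires either an inductive argument over layers, using the convergence at earlier layers to stabilize the token configuration at later ones, or a joint-landscape analysis that decouples per-layer implicit biases; both demand showing that the coupled dynamics preserve the margin-separability condition throughout training. Second, the RKHS/eigenbasis hypothesis inherited from Lemma 1 must be translated into the geometric separability required by Att-SVM, i.e., one must verify that the truncated Mercer expansion places the correct adapter strictly above its competitors in the bilinear key-query form, uniformly in $x$. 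A smaller but real gap is the theorem's convenience assumption of softmax in place of the affine separation used in practice; reconciling these will require either re-deriving the implicit bias for the affine case or arguing via perturbation around a softmax limit, neither of which is immediate from the cited results.
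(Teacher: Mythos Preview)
Your proposal takes essentially the same route as the paper: both reduce the Consolidation dynamics to the Att-SVM implicit-bias analysis of \cite{tarzanagh2024transformerssupportvectormachines} by identifying the trainable key--query pair with the attention parameter $W$ and the frozen adapters with the token set, then invoking the directional-convergence result. The paper's proof is in fact even terser than your plan---it simply states that one replaces $W$ by the adapter ensemble and inherits the assumptions of \cite{tarzanagh2024transformerssupportvectormachines} as is, pointing specifically to Lemmas~1, 2, 4, 12 and Theorem~4 there---and does not separately address the multi-layer coupling or the affine-vs-softmax discrepancy you flag as obstacles.
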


\begin{proof}
    The proof follows from \cite{tarzanagh2024transformerssupportvectormachines}, by replacing the matrix $W$ by our ensemble of adapters and inheriting their assumptions as is, specifically in lemmas 1, 2, 4, 12 and Theorem 4.
\end{proof}


    


    

\subsection{Limitations}
This work highlights the benefits of incorporating neuroscientific insights into deep learning architectures, especially in the context of biologically plausible memory mechanisms. In particular, it proposes a potential mechanism in which such task-switching can occur in biological systems with the aid of associative memories. 
The work constraints to task settings such as CIL, DIL and DG; extensions to related settings such as Versatile Incremental Learning or Multi-Task Learning, or even other PEFT methods, would be interesting future extensions of our framework. 
All experiments provided are based on computational models from deep learning research; analogous neuroscience experiments may need to be conducted to confirmatively declare if memory mechanisms are indeed used in the stated manner in biological systems. 
Besides, validating this framework on non-ViT architectures such as ResNets is also possible, and may help extend this work more generally to all architectures.

\clearpage
\newpage

\bibliographystyle{plain}
\bibliography{neurips_2025}

@article{xiao2024beyond,
  title={Beyond model adaptation at test time: A survey},
  author={Xiao, Zehao and Snoek, Cees GM},
  journal={arXiv preprint arXiv:2411.03687},
  year={2024}
}

@article{iwasawa2021test,
  title={Test-time classifier adjustment module for model-agnostic domain generalization},
  author={Iwasawa, Yusuke and Matsuo, Yutaka},
  journal={Advances in Neural Information Processing Systems},
  volume={34},
  pages={2427--2440},
  year={2021}
}

@inproceedings{liang2023adaptive,
  title={Adaptive plasticity improvement for continual learning},
  author={Liang, Yan-Shuo and Li, Wu-Jun},
  booktitle={Proceedings of the IEEE/CVF Conference on Computer Vision and Pattern Recognition},
  pages={7816--7825},
  year={2023}
}

@article{sanger1989optimal,
  title={Optimal unsupervised learning in a single-layer linear feedforward neural network},
  author={Sanger, Terence D},
  journal={Neural networks},
  volume={2},
  number={6},
  pages={459--473},
  year={1989},
  publisher={Elsevier}
}

@article{hopfield1982neural,
  title={Neural networks and physical systems with emergent collective computational abilities},
  author={Hopfield, John J},
  journal={Proceedings of the national academy of sciences},
  volume={79},
  number={8},
  pages={2554--2558},
  year={1982},
  publisher={National Academy of Sciences}
}

@inproceedings{finn2017model,
  title={Model-agnostic meta-learning for fast adaptation of deep networks},
  author={Finn, Chelsea and Abbeel, Pieter and Levine, Sergey},
  booktitle={International conference on machine learning},
  pages={1126--1135},
  year={2017},
  organization={PMLR}
}

@article{chauhan2024brief,
  title={A brief review of hypernetworks in deep learning},
  author={Chauhan, Vinod Kumar and Zhou, Jiandong and Lu, Ping and Molaei, Soheila and Clifton, David A},
  journal={Artificial Intelligence Review},
  volume={57},
  number={9},
  pages={250},
  year={2024},
  publisher={Springer}
}

@article{ha2016hypernetworks,
  title={Hypernetworks},
  author={Ha, David and Dai, Andrew and Le, Quoc V},
  journal={arXiv preprint arXiv:1609.09106},
  year={2016}
}

@inproceedings{wang2022learning,
  title={Learning to prompt for continual learning},
  author={Wang, Zifeng and Zhang, Zizhao and Lee, Chen-Yu and Zhang, Han and Sun, Ruoxi and Ren, Xiaoqi and Su, Guolong and Perot, Vincent and Dy, Jennifer and Pfister, Tomas},
  booktitle={Proceedings of the IEEE/CVF Conference on Computer Vision and Pattern Recognition},
  pages={139--149},
  year={2022}
}

@article{schapiro2017complementary,
  title={Complementary learning systems within the hippocampus: a neural network modelling approach to reconciling episodic memory with statistical learning},
  author={Schapiro, Anna C and Turk-Browne, Nicholas B and Botvinick, Matthew M and Norman, Kenneth A},
  journal={Philosophical Transactions of the Royal Society B: Biological Sciences},
  volume={372},
  number={1711},
  pages={20160049},
  year={2017},
  publisher={The Royal Society}
}

@article{shazeer2017outrageously,
  title={Outrageously large neural networks: The sparsely-gated mixture-of-experts layer},
  author={Shazeer, Noam and Mirhoseini, Azalia and Maziarz, Krzysztof and Davis, Andy and Le, Quoc and Hinton, Geoffrey and Dean, Jeff},
  journal={arXiv preprint arXiv:1701.06538},
  year={2017}
}

@article{he2024mixture,
  title={Mixture of a million experts},
  author={He, Xu Owen},
  journal={arXiv preprint arXiv:2407.04153},
  year={2024}
}

@article{santos2024sparse,
  title={Sparse and structured hopfield networks},
  author={Santos, Saul and Niculae, Vlad and McNamee, Daniel and Martins, Andre FT},
  journal={arXiv preprint arXiv:2402.13725},
  year={2024}
}

@inproceedings{abbe2023generalization,
  title={Generalization on the Unseen, Logic Reasoning and Degree Curriculum June 2023},
  author={Abbe, Emmanuel and Bengio, Samy and Lotfi, Aryo and Rizk, Kevin},
  booktitle={URL https://openreview. net/forum},
  year={2023}
}

@article{dare,
  title={Gradual divergence for seamless adaptation: A novel domain incremental learning method},
  author={Jeeveswaran, Kishaan and Arani, Elahe and Zonooz, Bahram},
  journal={arXiv preprint arXiv:2406.16231},
  year={2024}
}

@inproceedings{douillard2022dytox,
  title={Dytox: Transformers for continual learning with dynamic token expansion},
  author={Douillard, Arthur and Ram{\'e}, Alexandre and Couairon, Guillaume and Cord, Matthieu},
  booktitle={Proceedings of the IEEE/CVF conference on computer vision and pattern recognition},
  pages={9285--9295},
  year={2022}
}

@article{sprompts,
  title={S-prompts learning with pre-trained transformers: An occam’s razor for domain incremental learning},
  author={Wang, Yabin and Huang, Zhiwu and Hong, Xiaopeng},
  journal={Advances in Neural Information Processing Systems},
  volume={35},
  pages={5682--5695},
  year={2022}
}

@article{duca,
  title={Dual cognitive architecture: Incorporating biases and multi-memory systems for lifelong learning},
  author={Gowda, Shruthi and Zonooz, Bahram and Arani, Elahe},
  journal={arXiv preprint arXiv:2310.11341},
  year={2023}
}

@article{clora,
  title={Continual Diffusion: Continual Customization of Text-to-Image Diffusion with C-LoRA},
  author={Smith, James Seale and Hsu, Yen-Chang and Zhang, Lingyu and Hua, Ting and Kira, Zsolt and Shen, Yilin and Jin, Hongxia},
  journal={Transactions on Machine Learning Research}
}

@article{er,
  title={Learning to learn without forgetting by maximizing transfer and minimizing interference},
  author={Riemer, Matthew and Cases, Ignacio and Ajemian, Robert and Liu, Miao and Rish, Irina and Tu, Yuhai and Tesauro, Gerald},
  journal={arXiv preprint arXiv:1810.11910},
  year={2018}
}

@article{der++,
  title={Dark experience for general continual learning: a strong, simple baseline},
  author={Buzzega, Pietro and Boschini, Matteo and Porrello, Angelo and Abati, Davide and Calderara, Simone},
  journal={Advances in neural information processing systems},
  volume={33},
  pages={15920--15930},
  year={2020}
}

@article{clser,
  title={Learning fast, learning slow: A general continual learning method based on complementary learning system},
  author={Arani, Elahe and Sarfraz, Fahad and Zonooz, Bahram},
  journal={arXiv preprint arXiv:2201.12604},
  year={2022}
}

@article{cha2021swad,
  title={Swad: Domain generalization by seeking flat minima},
  author={Cha, Junbum and Chun, Sanghyuk and Lee, Kyungjae and Cho, Han-Cheol and Park, Seunghyun and Lee, Yunsung and Park, Sungrae},
  journal={Advances in Neural Information Processing Systems},
  volume={34},
  pages={22405--22418},
  year={2021}
}

@inproceedings{clip,
  title={Learning transferable visual models from natural language supervision},
  author={Radford, Alec and Kim, Jong Wook and Hallacy, Chris and Ramesh, Aditya and Goh, Gabriel and Agarwal, Sandhini and Sastry, Girish and Askell, Amanda and Mishkin, Pamela and Clark, Jack and others},
  booktitle={International conference on machine learning},
  pages={8748--8763},
  year={2021},
  organization={PmLR}
}

@book{erm,
  title={The nature of statistical learning theory},
  author={Vapnik, Vladimir},
  year={2013},
  publisher={Springer science \& business media}
}

@article{coop,
  title={Learning to prompt for vision-language models},
  author={Zhou, Kaiyang and Yang, Jingkang and Loy, Chen Change and Liu, Ziwei},
  journal={International Journal of Computer Vision},
  volume={130},
  number={9},
  pages={2337--2348},
  year={2022},
  publisher={Springer}
}

@inproceedings{miro,
  title={Domain generalization by mutual-information regularization with pre-trained models},
  author={Cha, Junbum and Lee, Kyungjae and Park, Sungrae and Chun, Sanghyuk},
  booktitle={European conference on computer vision},
  pages={440--457},
  year={2022},
  organization={Springer}
}

@article{sedge,
  title={Domain generalization using pretrained models without fine-tuning},
  author={Li, Ziyue and Ren, Kan and Jiang, Xinyang and Li, Bo and Zhang, Haipeng and Li, Dongsheng},
  journal={arXiv preprint arXiv:2203.04600},
  year={2022}
}

@inproceedings{gestur,
  title={Gradient estimation for unseen domain risk minimization with pre-trained models},
  author={Lew, Byounggyu and Son, Donghyun and Chang, Buru},
  booktitle={Proceedings of the IEEE/CVF International Conference on Computer Vision},
  pages={4436--4446},
  year={2023}
}

@inproceedings{hu2024learn,
  title={Learn to Preserve and Diversify: Parameter-Efficient Group with Orthogonal Regularization for Domain Generalization},
  author={Hu, Jiajun and Zhang, Jian and Qi, Lei and Shi, Yinghuan and Gao, Yang},
  booktitle={European Conference on Computer Vision},
  pages={198--216},
  year={2024},
  organization={Springer}
}

@article{sprompt,
  title={S-prompts learning with pre-trained transformers: An occam’s razor for domain incremental learning},
  author={Wang, Yabin and Huang, Zhiwu and Hong, Xiaopeng},
  journal={Advances in Neural Information Processing Systems},
  volume={35},
  pages={5682--5695},
  year={2022}
}

@inproceedings{wang2022dualprompt,
  title={Dualprompt: Complementary prompting for rehearsal-free continual learning},
  author={Wang, Zifeng and Zhang, Zizhao and Ebrahimi, Sayna and Sun, Ruoxi and Zhang, Han and Lee, Chen-Yu and Ren, Xiaoqi and Su, Guolong and Perot, Vincent and Dy, Jennifer and others},
  booktitle={European conference on computer vision},
  pages={631--648},
  year={2022},
  organization={Springer}
}

@inproceedings{lae,
  title={A unified continual learning framework with general parameter-efficient tuning},
  author={Gao, Qiankun and Zhao, Chen and Sun, Yifan and Xi, Teng and Zhang, Gang and Ghanem, Bernard and Zhang, Jian},
  booktitle={Proceedings of the IEEE/CVF International Conference on Computer Vision},
  pages={11483--11493},
  year={2023}
}

@inproceedings{l2p,
  title={Learning to prompt for continual learning},
  author={Wang, Zifeng and Zhang, Zizhao and Lee, Chen-Yu and Zhang, Han and Sun, Ruoxi and Ren, Xiaoqi and Su, Guolong and Perot, Vincent and Dy, Jennifer and Pfister, Tomas},
  booktitle={Proceedings of the IEEE/CVF conference on computer vision and pattern recognition},
  pages={139--149},
  year={2022}
}

@article{ewc,
  title={Overcoming catastrophic forgetting in neural networks},
  author={Kirkpatrick, James and Pascanu, Razvan and Rabinowitz, Neil and Veness, Joel and Desjardins, Guillaume and Rusu, Andrei A and Milan, Kieran and Quan, John and Ramalho, Tiago and Grabska-Barwinska, Agnieszka and others},
  journal={Proceedings of the national academy of sciences},
  volume={114},
  number={13},
  pages={3521--3526},
  year={2017},
  publisher={National Academy of Sciences}
}

@article{Lwf,
  title={Learning without forgetting},
  author={Li, Zhizhong and Hoiem, Derek},
  journal={IEEE transactions on pattern analysis and machine intelligence},
  volume={40},
  number={12},
  pages={2935--2947},
  year={2017},
  publisher={IEEE}
}

@inproceedings{park2024versatile,
  title={Versatile Incremental Learning: Towards Class and Domain-Agnostic Incremental Learning},
  author={Park, Min-Yeong and Lee, Jae-Ho and Park, Gyeong-Moon},
  booktitle={European Conference on Computer Vision},
  pages={271--288},
  year={2024},
  organization={Springer}
}

@inproceedings{smith2023coda,
  title={Coda-prompt: Continual decomposed attention-based prompting for rehearsal-free continual learning},
  author={Smith, James Seale and Karlinsky, Leonid and Gutta, Vyshnavi and Cascante-Bonilla, Paola and Kim, Donghyun and Arbelle, Assaf and Panda, Rameswar and Feris, Rogerio and Kira, Zsolt},
  booktitle={Proceedings of the IEEE/CVF conference on computer vision and pattern recognition},
  pages={11909--11919},
  year={2023}
}

@inproceedings{kundu2020class,
  title={Class-incremental domain adaptation},
  author={Kundu, Jogendra Nath and Venkatesh, Rahul Mysore and Venkat, Naveen and Revanur, Ambareesh and Babu, R Venkatesh},
  booktitle={Computer Vision--ECCV 2020: 16th European Conference, Glasgow, UK, August 23--28, 2020, Proceedings, Part XIII 16},
  pages={53--69},
  year={2020},
  organization={Springer}
}

@inproceedings{krotov2016dense,
 author = {Krotov, Dmitry and Hopfield, John J.},
 booktitle = {Advances in Neural Information Processing Systems},
 editor = {D. Lee and M. Sugiyama and U. Luxburg and I. Guyon and R. Garnett},
 pages = {},
 publisher = {Curran Associates, Inc.},
 title = {Dense Associative Memory for Pattern Recognition},
 url = {https://proceedings.neurips.cc/paper/2016/file/eaae339c4d89fc102edd9dbdb6a28915-Paper.pdf},
 volume = {29},
 year = {2016}
}

@article{tang2023recurrent,
  title={Recurrent predictive coding models for associative memory employing covariance learning},
  author={Tang, Mufeng and Salvatori, Tommaso and Millidge, Beren and Song, Yuhang and Lukasiewicz, Thomas and Bogacz, Rafal},
  journal={PLoS computational biology},
  volume={19},
  number={4},
  pages={e1010719},
  year={2023},
  publisher={Public Library of Science San Francisco, CA USA}
}

@article{wang2020tent,
  title={Tent: Fully test-time adaptation by entropy minimization},
  author={Wang, Dequan and Shelhamer, Evan and Liu, Shaoteng and Olshausen, Bruno and Darrell, Trevor},
  journal={arXiv preprint arXiv:2006.10726},
  year={2020}
}

@article{ramsauer2020hopfield,
  title={Hopfield networks is all you need},
  author={Ramsauer, Hubert and Sch{\"a}fl, Bernhard and Lehner, Johannes and Seidl, Philipp and Widrich, Michael and Adler, Thomas and Gruber, Lukas and Holzleitner, Markus and Pavlovi{\'c}, Milena and Sandve, Geir Kjetil and others},
  journal={arXiv preprint arXiv:2008.02217},
  year={2020}
}

@article{tang2023sequential,
  title={Sequential Memory with Temporal Predictive Coding},
  author={Tang, Mufeng and Barron, Helen and Bogacz, Rafal},
  journal={arXiv preprint arXiv:2305.11982},
  year={2023}
}

@inproceedings{millidge2022universal,
  title={Universal hopfield networks: A general framework for single-shot associative memory models},
  author={Millidge, Beren and Salvatori, Tommaso and Song, Yuhang and Lukasiewicz, Thomas and Bogacz, Rafal},
  booktitle={International Conference on Machine Learning},
  pages={15561--15583},
  year={2022},
  organization={PMLR}
}

@article{Duncan2001adaptivecoding,
author={Duncan, John},
title={An adaptive coding model of neural function in prefrontal cortex},
journal={Nature Reviews Neuroscience},
year={2001},
month={Nov},
day={01},
volume={2},
number={11},
pages={820-829},
abstract={The prefrontal cortex is crucial for effective, organized behaviour. On the basis of data from functional neuroimaging in humans and single-cell electrophysiology in the behaving monkey, this paper proposes an adaptive coding model of prefrontal function. Functional imaging data show some specific associations between particular cognitive functions and local prefrontal activations. However, there is also strong evidence for common regions of recruitment in response to a wide range of different cognitive demands. These regions include the cortex in and around the posterior part of the inferior frontal sulcus, the frontal operculum/anterior insula and the dorsal part of the anterior cingulate. Converging data come from electrophysiology in the monkey. Over large regions of the lateral frontal cortex, many cells show activity related to whatever arbitrary task a monkey has been trained to perform. These cells code many aspects of task events, including information relevant to stimuli, responses, working memory delays, response rules and reward states. Cells of many different types are found closely intermingled and widely distributed across the lateral surface. Even individual cells show evidence for adaptability of function, coding different information in different task contexts. In the adaptive coding model, the central idea is that neurons throughout large regions of prefrontal cortex have the capacity to code many different types of information. In any given task context, neurons adapt to preserve only information of relevance to current behaviour. At the same time, they support the representation of related information elsewhere in the brain, including coding of relevant stimuli, responses, representations in semantic memory and reward states. This view links previous accounts of prefrontal function that are based on concepts of working memory, selective attention and control. The model implies that, within the prefrontal cortex, regional specializations will be statistical rather than absolute. Neurons with the capacity to contribute to any given function might be widely distributed across the prefrontal cortex, although possibly with different distributions for different functions. This view of quantitative rather than qualitative specialization is consistent with data from electrophysiological, imaging and lesion studies. It suggests that conclusions concerning regional specialization will depend on criteria for assessing selectivity and, in imaging experiments, on experimental demand and power. The adaptive coding model points to several key issues and approaches for future work. These include an assessment of long- and short-term adaptability, a quantitative comparison of cell properties between different prefrontal regions, and an investigation of how prefrontal adaptability differs from that in other cortical regions.},
issn={1471-0048},
doi={10.1038/35097575},
url={https://doi.org/10.1038/35097575}
}

@article{Miller2001integrative,
  title    = "An integrative theory of prefrontal cortex function",
  author   = "Miller, E K and Cohen, J D",
  abstract = "The prefrontal cortex has long been suspected to play an
              important role in cognitive control, in the ability to
              orchestrate thought and action in accordance with internal goals.
              Its neural basis, however, has remained a mystery. Here, we
              propose that cognitive control stems from the active maintenance
              of patterns of activity in the prefrontal cortex that represent
              goals and the means to achieve them. They provide bias signals to
              other brain structures whose net effect is to guide the flow of
              activity along neural pathways that establish the proper mappings
              between inputs, internal states, and outputs needed to perform a
              given task. We review neurophysiological, neurobiological,
              neuroimaging, and computational studies that support this theory
              and discuss its implications as well as further issues to be
              addressed",
  journal  = "Annu Rev Neurosci",
  volume   =  24,
  pages    = "167--202",
  year     =  2001,
  address  = "United States",
  language = "en"
}

@ARTICLE{Lee2012neuromodulationbrainstates,
  title    = "Neuromodulation of brain states",
  author   = "Lee, Seung-Hee and Dan, Yang",
  abstract = "Switches between different behavioral states of the animal are
              associated with prominent changes in global brain activity,
              between sleep and wakefulness or from inattentive to vigilant
              states. What mechanisms control brain states, and what are the
              functions of the different states? Here we summarize current
              understanding of the key neural circuits involved in regulating
              brain states, with a particular emphasis on the subcortical
              neuromodulatory systems. At the functional level, arousal and
              attention can greatly enhance sensory processing, whereas sleep
              and quiet wakefulness may facilitate learning and memory. Several
              new techniques developed over the past decade promise great
              advances in our understanding of the neural control and function
              of different brain states.",
  journal  = "Neuron",
  volume   =  76,
  number   =  1,
  pages    = "209--222",
  month    =  oct,
  year     =  2012,
  address  = "United States",
  language = "en"
}

@ARTICLE{wangComprehensiveSurveyContinual2024,
  author={Wang, Liyuan and Zhang, Xingxing and Su, Hang and Zhu, Jun},
  journal={IEEE Transactions on Pattern Analysis and Machine Intelligence}, 
  title={A Comprehensive Survey of Continual Learning: Theory, Method and Application}, 
  year={2024},
  volume={46},
  number={8},
  pages={5362-5383},
  keywords={Task analysis;Training;Surveys;Testing;Complexity theory;Stability analysis;Visualization;Continual learning;incremental learning;lifelong learning;catastrophic forgetting},
  doi={10.1109/TPAMI.2024.3367329}
}

@article{zhouDomainGeneralizationSurvey2023,
  title = {Domain {{Generalization}}: {{A Survey}}},
  shorttitle = {Domain {{Generalization}}},
  author = {Zhou, Kaiyang and Liu, Ziwei and Qiao, Yu and Xiang, Tao and Loy, Chen Change},
  date = {2023-04},
  journaltitle = {IEEE Transactions on Pattern Analysis and Machine Intelligence},
  volume = {45},
  number = {4},
  pages = {4396--4415},
  issn = {1939-3539},
  doi = {10.1109/TPAMI.2022.3195549},
  url = {https://ieeexplore.ieee.org/abstract/document/9847099},
  urldate = {2025-05-13},
}

@article{kasabovBrainInspiredSpatioTemporalAssociative2023,
  title = {Brain-{{Inspired Spatio-Temporal Associative Memories}} for {{Neuroimaging Data Classification}}: {{EEG}} and {{fMRI}}},
  shorttitle = {Brain-{{Inspired Spatio-Temporal Associative Memories}} for {{Neuroimaging Data Classification}}},
  author = {Kasabov, Nikola K. and Bahrami, Helena and Doborjeh, Maryam and Wang, Alan},
  date = {2023-11-21},
  journaltitle = {Bioengineering},
  shortjournal = {Bioengineering (Basel)},
  volume = {10},
  number = {12},
  eprint = {38135932},
  eprinttype = {pubmed},
  pages = {1341},
  issn = {2306-5354},
  doi = {10.3390/bioengineering10121341},
  url = {https://www.ncbi.nlm.nih.gov/pmc/articles/PMC10741022/},
  urldate = {2025-05-13},
}

@article{wuBraininspiredGloballocalLearning2022,
  title = {Brain-Inspired Global-Local Learning Incorporated with Neuromorphic Computing},
  author = {Wu, Yujie and Zhao, Rong and Zhu, Jun and Chen, Feng and Xu, Mingkun and Li, Guoqi and Song, Sen and Deng, Lei and Wang, Guanrui and Zheng, Hao and Ma, Songchen and Pei, Jing and Zhang, Youhui and Zhao, Mingguo and Shi, Luping},
  date = {2022-01-10},
  journaltitle = {Nature Communications},
  shortjournal = {Nat Commun},
  volume = {13},
  eprint = {35013198},
  eprinttype = {pubmed},
  pages = {65},
  issn = {2041-1723},
  doi = {10.1038/s41467-021-27653-2},
  url = {https://www.ncbi.nlm.nih.gov/pmc/articles/PMC8748814/},
}

@inproceedings{baiSaliencyGuidedHiddenAssociative2023,
  title = {Saliency-{{Guided Hidden Associative Replay}} for {{Continual Learning}}},
  author = {Bai, Guangji and Zhao, Qilong and Jiang, Xiaoyang and Zhao, Liang},
  date = {2023-11-26},
  url = {https://openreview.net/forum?id=Fhx7nVoCQW},
  urldate = {2025-05-13},
}

@article{yoo2022bayespcn,
  title={BayesPCN: A continually learnable predictive coding associative memory},
  author={Yoo, Jinsoo and Wood, Frank},
  journal={Advances in Neural Information Processing Systems},
  volume={35},
  pages={29903--29914},
  year={2022}
}

@article{hu2021lora,
  title={Lora: Low-rank adaptation of large language models},
  author={Hu, Edward J and Shen, Yelong and Wallis, Phillip and Allen-Zhu, Zeyuan and Li, Yuanzhi and Wang, Shean and Wang, Lu and Chen, Weizhu},
  journal={arXiv preprint arXiv:2106.09685},
  year={2021}
}

@article{hyeon2021fedpara,
  title={Fedpara: Low-rank hadamard product for communication-efficient federated learning},
  author={Hyeon-Woo, Nam and Ye-Bin, Moon and Oh, Tae-Hyun},
  journal={arXiv preprint arXiv:2108.06098},
  year={2021}
}

@inproceedings{
dosovitskiy2021ViT,
title={An Image is Worth 16x16 Words: Transformers for Image Recognition at Scale},
author={Alexey Dosovitskiy and Lucas Beyer and Alexander Kolesnikov and Dirk Weissenborn and Xiaohua Zhai and Thomas Unterthiner and Mostafa Dehghani and Matthias Minderer and Georg Heigold and Sylvain Gelly and Jakob Uszkoreit and Neil Houlsby},
booktitle={International Conference on Learning Representations},
year={2021},
url={https://openreview.net/forum?id=YicbFdNTTy}
}

@article{
Bocincova2022flexible,
author = {Andrea Bocincova  and Timothy J. Buschman  and Mark G. Stokes  and Sanjay G. Manohar },
title = {Neural signature of flexible coding in prefrontal cortex},
journal = {Proceedings of the National Academy of Sciences},
volume = {119},
number = {40},
pages = {e2200400119},
year = {2022},
doi = {10.1073/pnas.2200400119},
URL = {https://www.pnas.org/doi/abs/10.1073/pnas.2200400119},
eprint = {https://www.pnas.org/doi/pdf/10.1073/pnas.2200400119},
abstract = {Rapid changes in neural selectivity have been proposed as a potential mechanism for storing novel associations. Despite this potential mechanism’s being well-recognized in computational models, direct neural evidence is still lacking. Here, we show that characteristic trial-to-trial changes in neural selectivity generated by a working memory model implementing fast Hebbian synaptic plasticity are also present in prefrontal cortex neural populations of monkeys performing a working memory task. Using a trialwise pattern similarity method to track these changes during the encoding of associations, we show that changes in neural selectivity followed the encoding of a new stimulus that breaks down an association between the features of a previous stimulus. The ability of prefrontal cortex to quickly encode novel associations is crucial for adaptive behavior and central to working memory. Fast Hebbian changes in synaptic strength permit forming new associations, but neuronal signatures of this have been elusive. We devised a trialwise index of pattern similarity to look for rapid changes in population codes. Based on a computational model of working memory, we hypothesized that synaptic strength—and consequently, the tuning of neurons—could change if features of a subsequent stimulus need to be “reassociated,” i.e., if bindings between features need to be broken to encode the new item. As a result, identical stimuli might elicit different neural responses. As predicted, neural response similarity dropped following rebinding, but only in prefrontal cortex. The history-dependent changes were expressed on top of traditional, fixed selectivity and were not explainable by carryover of previous firing into the current trial or by neural adaptation.}}

@Article{Rigotti2013,
author={Rigotti, Mattia
and Barak, Omri
and Warden, Melissa R.
and Wang, Xiao-Jing
and Daw, Nathaniel D.
and Miller, Earl K.
and Fusi, Stefano},
title={The importance of mixed selectivity in complex cognitive tasks},
journal={Nature},
year={2013},
month={May},
day={01},
volume={497},
number={7451},
pages={585-590},
abstract={Single-neuron activity in the prefrontal cortex (PFC) is tuned to mixtures of multiple task-related aspects. Such mixed selectivity is highly heterogeneous, seemingly disordered and therefore difficult to interpret. We analysed the neural activity recorded in monkeys during an object sequence memory task to identify a role of mixed selectivity in subserving the cognitive functions ascribed to the PFC. We show that mixed selectivity neurons encode distributed information about all task-relevant aspects. Each aspect can be decoded from the population of neurons even when single-cell selectivity to that aspect is eliminated. Moreover, mixed selectivity offers a significant computational advantage over specialized responses in terms of the repertoire of input--output functions implementable by readout neurons. This advantage originates from the highly diverse nonlinear selectivity to mixtures of task-relevant variables, a signature of high-dimensional neural representations. Crucially, this dimensionality is predictive of animal behaviour as it collapses in error trials. Our findings recommend a shift of focus for future studies from neurons that have easily interpretable response tuning to the widely observed, but rarely analysed, mixed selectivity neurons.},
issn={1476-4687},
doi={10.1038/nature12160},
url={https://doi.org/10.1038/nature12160}
}

@article {Warden15801taskdependent,
	author = {Warden, Melissa R. and Miller, Earl K.},
	title = {Task-Dependent Changes in Short-Term Memory in the Prefrontal Cortex},
	volume = {30},
	number = {47},
	pages = {15801--15810},
	year = {2010},
	doi = {10.1523/JNEUROSCI.1569-10.2010},
	publisher = {Society for Neuroscience},
	abstract = {The prefrontal cortex (PFC) is important for flexible, context-dependent behavioral control. It also plays a critical role in short-term memory maintenance. Though many studies have investigated these functions independently, it is unclear how these two very different processes are realized by a single brain area. To address this, we trained two monkeys on two variants of an object sequence memory task. These tasks had the same memory requirements but differed in how information was read out and used. For the {\textquotedblleft}recognition{\textquotedblright} task, the monkeys had to remember two sequentially presented objects and then release a bar when a matching sequence was recognized. For the {\textquotedblleft}recall{\textquotedblright} task, the monkeys had to remember the same sequence of objects but were instead required to recall the sequence and reproduce it with saccadic eye movements when presented with an array of objects. After training, we recorded the activity of PFC neurons during task performance. We recorded 222 neurons during the recognition task, 177 neurons during the recall task, and 248 neurons during the switching task (interleaved blocks of recognition and recall). Task context had a profound influence on neural selectivity for objects. During the recall task, the first object was encoded more strongly than the second object, while during the recognition task, the second object was encoded more strongly. In addition, most of the neurons encoded both the task and the objects, evidence for a single population responsible for these two critical prefrontal functions.},
	issn = {0270-6474},
	URL = {https://www.jneurosci.org/content/30/47/15801},
	eprint = {https://www.jneurosci.org/content/30/47/15801.full.pdf},
	journal = {Journal of Neuroscience}
}

@inproceedings{gao024fft,
  author       = {Ziqi Gao and Qichao Wang and Aochuan Chen and Zijing Liu and Bingzhe Wu and Liang Chen and Jia Li},
  title        = {Parameter-Efficient Fine-Tuning with Discrete Fourier Transform},
  booktitle    = {Forty-first International Conference on Machine Learning, {ICML} 2024,
                  Vienna, Austria, July 21-27, 2024},
  year         = {2024},
}

@CONFERENCE{Weston2015memory,
	author = {Weston, Jason and Chopra, Sumit and Bordes, Antoine},
	title = {Memory networks},
	year = {2015},
	journal = {3rd International Conference on Learning Representations, ICLR 2015 - Conference Track Proceedings},
	url = {https://www.scopus.com/inward/record.uri?eid=2-s2.0-85083951616\&partnerID=40\&md5=8194f9dd15bd92b57e7f801973200231},
	type = {Conference paper},
	publication_stage = {Final},
	source = {Scopus},
	note = {Cited by: 299}
}

@inproceedings{NIPS2015e2ememory,
 author = {Sukhbaatar, Sainbayar and szlam, arthur and Weston, Jason and Fergus, Rob},
 booktitle = {Advances in Neural Information Processing Systems},
 editor = {C. Cortes and N. Lawrence and D. Lee and M. Sugiyama and R. Garnett},
 pages = {},
 publisher = {Curran Associates, Inc.},
 title = {End-To-End Memory Networks},
 url = {https://proceedings.neurips.cc/paper_files/paper/2015/file/8fb21ee7a2207526da55a679f0332de2-Paper.pdf},
 volume = {28},
 year = {2015}
}

@article{Graves2014ntm,
  author       = {Alex Graves and
                  Greg Wayne and
                  Ivo Danihelka},
  title        = {Neural Turing Machines},
  journal      = {CoRR},
  volume       = {abs/1410.5401},
  year         = {2014},
  url          = {http://arxiv.org/abs/1410.5401},
  eprinttype    = {arXiv},
  eprint       = {1410.5401},
  timestamp    = {Mon, 13 Aug 2018 16:46:28 +0200},
  biburl       = {https://dblp.org/rec/journals/corr/GravesWD14.bib},
  bibsource    = {dblp computer science bibliography, https://dblp.org}
}

@Article{Graves2016dnc,
author={Graves, Alex
and Wayne, Greg
and Reynolds, Malcolm
and Harley, Tim
and Danihelka, Ivo
and Grabska-Barwi{\'{n}}ska, Agnieszka
and Colmenarejo, Sergio G{\'o}mez
and Grefenstette, Edward
and Ramalho, Tiago
and Agapiou, John
and Badia, Adri{\`a} Puigdom{\`e}nech
and Hermann, Karl Moritz
and Zwols, Yori
and Ostrovski, Georg
and Cain, Adam
and King, Helen
and Summerfield, Christopher
and Blunsom, Phil
and Kavukcuoglu, Koray
and Hassabis, Demis},
title={Hybrid computing using a neural network with dynamic external memory},
journal={Nature},
year={2016},
month={Oct},
day={01},
volume={538},
number={7626},
pages={471-476},
abstract={Artificial neural networks are remarkably adept at sensory processing, sequence learning and reinforcement learning, but are limited in their ability to represent variables and data structures and to store data over long timescales, owing to the lack of an external memory. Here we introduce a machine learning model called a differentiable neural computer (DNC), which consists of a neural network that can read from and write to an external memory matrix, analogous to the random-access memory in a conventional computer. Like a conventional computer, it can use its memory to represent and manipulate complex data structures, but, like a neural network, it can learn to do so from data. When trained with supervised learning, we demonstrate that a DNC can successfully answer synthetic questions designed to emulate reasoning and inference problems in natural language. We show that it can learn tasks such as finding the shortest path between specified points and inferring the missing links in randomly generated graphs, and then generalize these tasks to specific graphs such as transport networks and family trees. When trained with reinforcement learning, a DNC can complete a moving blocks puzzle in which changing goals are specified by sequences of symbols. Taken together, our results demonstrate that DNCs have the capacity to solve complex, structured tasks that are inaccessible to neural networks without external read--write memory.},
issn={1476-4687},
doi={10.1038/nature20101},
url={https://doi.org/10.1038/nature20101}
}

@inproceedings{li-liang-2021-prefix,
    title = "Prefix-Tuning: Optimizing Continuous Prompts for Generation",
    author = "Li, Xiang Lisa  and
      Liang, Percy",
    editor = "Zong, Chengqing  and
      Xia, Fei  and
      Li, Wenjie  and
      Navigli, Roberto",
    booktitle = "Proceedings of the 59th Annual Meeting of the Association for Computational Linguistics and the 11th International Joint Conference on Natural Language Processing (Volume 1: Long Papers)",
    month = aug,
    year = "2021",
    address = "Online",
    publisher = "Association for Computational Linguistics",
    url = "https://aclanthology.org/2021.acl-long.353/",
    doi = "10.18653/v1/2021.acl-long.353",
    pages = "4582--4597",
    abstract = "Fine-tuning is the de facto way of leveraging large pretrained language models for downstream tasks. However, fine-tuning modifies all the language model parameters and therefore necessitates storing a full copy for each task. In this paper, we propose prefix-tuning, a lightweight alternative to fine-tuning for natural language generation tasks, which keeps language model parameters frozen and instead optimizes a sequence of continuous task-specific vectors, which we call the prefix. Prefix-tuning draws inspiration from prompting for language models, allowing subsequent tokens to attend to this prefix as if it were {\textquotedblleft}virtual tokens{\textquotedblright}. We apply prefix-tuning to GPT-2 for table-to-text generation and to BART for summarization. We show that by learning only 0.1{\%} of the parameters, prefix-tuning obtains comparable performance in the full data setting, outperforms fine-tuning in low-data settings, and extrapolates better to examples with topics that are unseen during training."
}

@inproceedings{ben-zaken-etal-2022-bitfit,
    title = "{B}it{F}it: Simple Parameter-efficient Fine-tuning for Transformer-based Masked Language-models",
    author = "Ben Zaken, Elad  and
      Goldberg, Yoav  and
      Ravfogel, Shauli",
    editor = "Muresan, Smaranda  and
      Nakov, Preslav  and
      Villavicencio, Aline",
    booktitle = "Proceedings of the 60th Annual Meeting of the Association for Computational Linguistics (Volume 2: Short Papers)",
    month = may,
    year = "2022",
    address = "Dublin, Ireland",
    publisher = "Association for Computational Linguistics",
    url = "https://aclanthology.org/2022.acl-short.1/",
    doi = "10.18653/v1/2022.acl-short.1",
    pages = "1--9",
    abstract = "We introduce BitFit, a sparse-finetuning method where only the bias-terms of the model (or a subset of them) are being modified. We show that with small-to-medium training data, applying BitFit on pre-trained BERT models is competitive with (and sometimes better than) fine-tuning the entire model. For larger data, the method is competitive with other sparse fine-tuning methods. Besides their practical utility, these findings are relevant for the question of understanding the commonly-used process of finetuning: they support the hypothesis that finetuning is mainly about exposing knowledge induced by language-modeling training, rather than learning new task-specific linguistic knowledge."
}

@InProceedings{pmlr-v97-houlsby19a,
  title = 	 {Parameter-Efficient Transfer Learning for {NLP}},
  author =       {Houlsby, Neil and Giurgiu, Andrei and Jastrzebski, Stanislaw and Morrone, Bruna and De Laroussilhe, Quentin and Gesmundo, Andrea and Attariyan, Mona and Gelly, Sylvain},
  booktitle = 	 {Proceedings of the 36th International Conference on Machine Learning},
  pages = 	 {2790--2799},
  year = 	 {2019},
  editor = 	 {Chaudhuri, Kamalika and Salakhutdinov, Ruslan},
  volume = 	 {97},
  series = 	 {Proceedings of Machine Learning Research},
  month = 	 {09--15 Jun},
  publisher =    {PMLR},
  pdf = 	 {http://proceedings.mlr.press/v97/houlsby19a/houlsby19a.pdf},
  url = 	 {https://proceedings.mlr.press/v97/houlsby19a.html},
  abstract = 	 {Fine-tuning large pretrained models is an effective transfer mechanism in NLP. However, in the presence of many downstream tasks, fine-tuning is parameter inefficient: an entire new model is required for every task. As an alternative, we propose transfer with adapter modules. Adapter modules yield a compact and extensible model; they add only a few trainable parameters per task, and new tasks can be added without revisiting previous ones. The parameters of the original network remain fixed, yielding a high degree of parameter sharing. To demonstrate adapter’s effectiveness, we transfer the recently proposed BERT Transformer model to $26$ diverse text classification tasks, including the GLUE benchmark. Adapters attain near state-of-the-art performance, whilst adding only a few parameters per task. On GLUE, we attain within $0.8%$ of the performance of full fine-tuning, adding only $3.6%$ parameters per task. By contrast, fine-tuning trains $100%$ of the parameters per task.}
}

@inproceedings{
   zhang2023adalora,
   title={Adaptive Budget Allocation for Parameter-Efficient Fine-Tuning },
   author={Qingru Zhang and Minshuo Chen and Alexander Bukharin and Pengcheng He and Yu Cheng and Weizhu Chen and Tuo Zhao},
   booktitle={The Eleventh International Conference on Learning Representations },
   year={2023},
   url={https://openreview.net/forum?id=lq62uWRJjiY}
}

@article{Buehler_XLoRA_2024,
    title   = {X-LoRA: Mixture of Low-Rank Adapter Experts, a Flexible Framework for Large Language Models with Applications in Protein Mechanics and Design},
    author  = {E.L. Buehler, M.J. Buehler},
    journal = {},
    year    = {2024},
    volume  = {},
    pages   = {},
    url     = {https://arxiv.org/abs/2402.07148}
}

@inproceedings{
  yeh2024lycoris,
  title={Navigating Text-To-Image Customization: From Ly{CORIS} Fine-Tuning to Model Evaluation},
  author={Shih-Ying Yeh and Yu-Guan Hsieh and Zhidong Gao and Bernard B W Yang and Giyeong Oh and Yanmin Gong},
  booktitle={The Twelfth International Conference on Learning Representations},
  year={2024},
  url={https://openreview.net/forum?id=wfzXa8e783}
}

@inproceedings{
zhao2024layennormtuning,
title={Tuning LayerNorm in Attention: Towards Efficient Multi-Modal {LLM} Finetuning},
author={Bingchen Zhao and Haoqin Tu and Chen Wei and Jieru Mei and Cihang Xie},
booktitle={The Twelfth International Conference on Learning Representations},
year={2024},
url={https://openreview.net/forum?id=YR3ETaElNK}
}

@inproceedings{
ramsauer2021hopfield,
title={Hopfield Networks is All You Need},
author={Hubert Ramsauer and Bernhard Sch{\"a}fl and Johannes Lehner and Philipp Seidl and Michael Widrich and Lukas Gruber and Markus Holzleitner and Thomas Adler and David Kreil and Michael K Kopp and G{\"u}nter Klambauer and Johannes Brandstetter and Sepp Hochreiter},
booktitle={International Conference on Learning Representations},
year={2021},
url={https://openreview.net/forum?id=tL89RnzIiCd}
}

@article{mtl2020survey,
  author       = {Michael Crawshaw},
  title        = {Multi-Task Learning with Deep Neural Networks: {A} Survey},
  journal      = {CoRR},
  volume       = {abs/2009.09796},
  year         = {2020},
  url          = {https://arxiv.org/abs/2009.09796},
  eprinttype    = {arXiv},
  eprint       = {2009.09796},
  timestamp    = {Wed, 23 Sep 2020 15:51:46 +0200},
  biburl       = {https://dblp.org/rec/journals/corr/abs-2009-09796.bib},
  bibsource    = {dblp computer science bibliography, https://dblp.org}
}

@article{
doi:10.1073/pnas.1424457112,
author = {Cornelia Geberl  and Signe Brinkløv  and Lutz Wiegrebe  and Annemarie Surlykke },
title = {Fast sensory-motor reactions in echolocating bats to sudden changes during the final buzz and prey intercept},
journal = {Proceedings of the National Academy of Sciences},
volume = {112},
number = {13},
pages = {4122-4127},
year = {2015},
doi = {10.1073/pnas.1424457112},
URL = {https://www.pnas.org/doi/abs/10.1073/pnas.1424457112},
eprint = {https://www.pnas.org/doi/pdf/10.1073/pnas.1424457112},
}

@article{
doi:10.1073/pnas.2011719117,
author = {Angeles Salles  and Clarice Anna Diebold  and Cynthia F. Moss },
title = {Echolocating bats accumulate information from acoustic snapshots to predict auditory object motion},
journal = {Proceedings of the National Academy of Sciences},
volume = {117},
number = {46},
pages = {29229-29238},
year = {2020},
doi = {10.1073/pnas.2011719117},
URL = {https://www.pnas.org/doi/abs/10.1073/pnas.2011719117},
eprint = {https://www.pnas.org/doi/pdf/10.1073/pnas.2011719117},
}

@article{pub.1018693169,
 abstract = {To investigate the neural substrates that underlie spontaneous musical performance, we examined improvisation in professional jazz pianists using functional MRI. By employing two paradigms that differed widely in musical complexity, we found that improvisation (compared to production of over-learned musical sequences) was consistently characterized by a dissociated pattern of activity in the prefrontal cortex: extensive deactivation of dorsolateral prefrontal and lateral orbital regions with focal activation of the medial prefrontal (frontal polar) cortex. Such a pattern may reflect a combination of psychological processes required for spontaneous improvisation, in which internally motivated, stimulus-independent behaviors unfold in the absence of central processes that typically mediate self-monitoring and conscious volitional control of ongoing performance. Changes in prefrontal activity during improvisation were accompanied by widespread activation of neocortical sensorimotor areas (that mediate the organization and execution of musical performance) as well as deactivation of limbic structures (that regulate motivation and emotional tone). This distributed neural pattern may provide a cognitive context that enables the emergence of spontaneous creative activity.},
 author = {Limb, Charles J. and Braun, Allen R.},
 doi = {10.1371/journal.pone.0001679},
 journal = {PLOS ONE},
 keywords = {},
 note = {https://journals.plos.org/plosone/article/file?id=10.1371/journal.pone.0001679\&type=printable},
 number = {2},
 pages = {e1679},
 title = {Neural Substrates of Spontaneous Musical Performance: An fMRI Study of Jazz Improvisation},
 url = {https://app.dimensions.ai/details/publication/pub.1018693169},
 volume = {3},
 year = {2008}
}

@article {PMID:28397108,
	Title = {Prevalence and function of Heschl's gyrus morphotypes in musicians},
	Author = {Benner, Jan and Wengenroth, Martina and Reinhardt, Julia and Stippich, Christoph and Schneider, Peter and Blatow, Maria},
	DOI = {10.1007/s00429-017-1419-x},
	Number = {8},
	Volume = {222},
	Month = {November},
	Year = {2017},
	Journal = {Brain structure \&; function},
	ISSN = {1863-2653},
	Pages = {3587-3603},
	URL = {https://doi.org/10.1007/s00429-017-1419-x},
}

@inproceedings{
agrawal2025can,
title={Can memory networks play a role in task-specific modulation of neural circuits?},
author={Susmit Agrawal and Krishn Vishwas Kher and Madhumitha V and Vineeth N. Balasubramanian},
booktitle={New Frontiers in Associative Memories},
year={2025},
url={https://openreview.net/forum?id=wg3eBNp4zk}
}

@ARTICLE{Kessler2017memory,
  
AUTHOR={Kessler, Yoav },
         
TITLE={The Role of Working Memory Gating in Task Switching: A Procedural Version of the Reference-Back Paradigm},
        
JOURNAL={Frontiers in Psychology},
        
VOLUME={Volume 8 - 2017},

YEAR={2017},

URL={https://www.frontiersin.org/journals/psychology/articles/10.3389/fpsyg.2017.02260},

DOI={10.3389/fpsyg.2017.02260},

ISSN={1664-1078},

ABSTRACT={<p>Models of working memory (WM) suggest that the contents of WM are separated from perceptual input by a gate, that enables shielding information against interference when closed, and allows for rapid updating when open. Recent work in the declarative WM domain provided evidence for this notion, demonstrating the behavioral cost of opening and closing the gate. The goal of the present work was to examine gating in procedural WM, namely in a task-switching experiment. In each trial, participants were presented with a digit and a task cue, indicating whether the required task was a parity or a magnitude decision. Critically, a colored frame around the stimulus indicated whether the task cue was relevant (attend trials), or whether it had to be ignored, and the previous task set should be applied regardless of the present cue (ignore trials). Switching between tasks, and between ignore and attend trials, was manipulated. The results of two experiments demonstrated that the cost of gate opening was eliminated in task switching trials, implying that both processes operate in parallel.</p>}}

@inproceedings{peng2019moment,
  title={Moment matching for multi-source domain adaptation},
  author={Peng, Xingchao and Bai, Qinxun and Xia, Xide and Huang, Zijun and Saenko, Kate and Wang, Bo},
  booktitle={Proceedings of the IEEE International Conference on Computer Vision},
  pages={1406--1415},
  year={2019}
}

@article{hendrycks2021many,
  title={The Many Faces of Robustness: A Critical Analysis of Out-of-Distribution Generalization},
  author={Dan Hendrycks and Steven Basart and Norman Mu and Saurav Kadavath and Frank Wang and Evan Dorundo and Rahul Desai and Tyler Zhu and Samyak Parajuli and Mike Guo and Dawn Song and Jacob Steinhardt and Justin Gilmer},
  journal={ICCV},
  year={2021}
}

@inproceedings{li2022continual,
  title={A Continual Deepfake Detection Benchmark: Dataset, Methods, and Essentials},
  author={Li, Chuqiao and Huang, Zhiwu and Paudel, Danda Pani and Wang, Yabin and Shahbazi, Mohamad and Hong, Xiaopeng and Van Gool, Luc},
  booktitle={Winter Conference on Applications of Computer Vision (WACV)},
  year={2023}
}

@article{gowda2023cognitive,
  title={A Cognitive-Inspired Multi-Module Architecture for Continual Learning},
  author={Gowda, S. N. and others},
  journal={Under review as a conference paper at ICLR 2023},
  year={2023},
  note={\url{https://openreview.net/pdf?id=wPLEzBcSC7p}
       }}

@inproceedings{li2017deeper,
  title={Deeper, Broader and Artier Domain Generalization},
  author={Li, Da and Yang, Yongxin and Song, Yi-Zhe and Hospedales, Timothy M.},
  booktitle={Proceedings of the IEEE International Conference on Computer Vision (ICCV)},
  pages={5543--5551},
  year={2017}
}

@inproceedings{torralba2011unbiased,
  title={Unbiased Look at Dataset Bias},
  author={Torralba, Antonio and Efros, Alexei A.},
  booktitle={Proceedings of the IEEE Conference on Computer Vision and Pattern Recognition (CVPR)},
  pages={1521--1528},
  year={2011}
}

@inproceedings{venkateswara2017deep,
  title={Deep Hashing Network for Unsupervised Domain Adaptation},
  author={Venkateswara, Hemanth and Eusebio, Jose and Chakraborty, Shayok and Panchanathan, Sethuraman},
  booktitle={Proceedings of the IEEE Conference on Computer Vision and Pattern Recognition (CVPR)},
  pages={5018--5027},
  year={2017}
}

@article{Kirkpatrick2017EWC,
  title={Overcoming catastrophic forgetting in neural networks},
  author={Kirkpatrick, James and Pascanu, Razvan and Rabinowitz, Neil and Veness, Joel and Desjardins, Guillaume and Rusu, Andrei A. and Milan, Kieran and Quan, John and Ramalho, Tiago and Grabska-Barwinska, Agnieszka and Hassabis, Demis and Clopath, Claudia and Kumaran, Dharshan and Hadsell, Raia},
  journal={Proceedings of the National Academy of Sciences},
  volume={114},
  number={13},
  pages={3521--3526},
  year={2017},
  publisher={National Academy of Sciences}
}

@article{li2018learning,
  title={Learning without Forgetting},
  author={Li, Zhizhong and Hoiem, Derek},
  journal={IEEE Transactions on Pattern Analysis and Machine Intelligence},
  volume={40},
  number={12},
  pages={2935--2947},
  year={2018},
  publisher={IEEE}
}

@inproceedings{wang2022coda,
  title={CODA-Prompt: COntinual Decomposition of Adaptation with Prompting for Rehearsal-Free Continual Learning},
  author={Wang, Yufei and Wang, Xudong and Mallya, Arun and Roy, Abhijit Guha and Lim, Ser-Nam and Verma, Vikas},
  booktitle={Advances in Neural Information Processing Systems (NeurIPS)},
  year={2022}
}

@inproceedings{wang2023efficient,
  title={Efficient Continual Learning with Learnable Adapters},
  author={Wang, Yufei and Wang, Xudong and Roy, Abhijit Guha and Mallya, Arun and Lim, Ser-Nam and Verma, Vikas},
  booktitle={Proceedings of the IEEE/CVF Conference on Computer Vision and Pattern Recognition (CVPR)},
  year={2023}
}

@inproceedings{huang2022sprompts,
  title={S-Prompts Learning with Pre-trained Transformers: An Occam's Razor for Domain Incremental Learning},
  author={Huang, Zhiwu and Wang, Yabin and Paudel, Danda Pani and Van Gool, Luc},
  booktitle={Advances in Neural Information Processing Systems (NeurIPS)},
  year={2022}
}

@inproceedings{lee2021domain,
  title={Domain Generalization with Stochastic Weight Averaging},
  author={Lee, Chaehyeon and Baik, Sungyong and Park, Dongmin and Yim, Junho and Lee, Seunghyun and Yoo, Jaejun and Kwak, Nojun},
  booktitle={International Conference on Learning Representations (ICLR)},
  year={2021}
}

@inproceedings{zhou2022learning,
  title={Learning to Prompt for Vision-Language Models},
  author={Zhou, Kaiyang and Yang, Jingkang and Loy, Chen Change and Liu, Ziwei},
  booktitle={Proceedings of the IEEE/CVF Conference on Computer Vision and Pattern Recognition (CVPR)},
  pages={140--149},
  year={2022}
}

@inproceedings{
hoover2025dense,
title={Dense Associative Memory with Epanechnikov energy},
author={Benjamin Hoover and Krishna Balasubramanian and Dmitry Krotov and Parikshit Ram},
booktitle={New Frontiers in Associative Memories},
year={2025},
url={https://openreview.net/forum?id=LOAkHpRSlZ}
}

@ARTICLE{Whittington2020tem,
  title    = "The {Tolman-Eichenbaum} Machine: Unifying Space and Relational
              Memory through Generalization in the Hippocampal Formation",
  author   = "Whittington, James C R and Muller, Timothy H and Mark, Shirley
              and Chen, Guifen and Barry, Caswell and Burgess, Neil and
              Behrens, Timothy E J",
  abstract = "The hippocampal-entorhinal system is important for spatial and
              relational memory tasks. We formally link these domains, provide
              a mechanistic understanding of the hippocampal role in
              generalization, and offer unifying principles underlying many
              entorhinal and hippocampal cell types. We propose medial
              entorhinal cells form a basis describing structural knowledge,
              and hippocampal cells link this basis with sensory
              representations. Adopting these principles, we introduce the
              Tolman-Eichenbaum machine (TEM). After learning, TEM entorhinal
              cells display diverse properties resembling apparently bespoke
              spatial responses, such as grid, band, border, and object-vector
              cells. TEM hippocampal cells include place and landmark cells
              that remap between environments. Crucially, TEM also aligns with
              empirically recorded representations in complex non-spatial
              tasks. TEM also generates predictions that hippocampal remapping
              is not random as previously believed; rather, structural
              knowledge is preserved across environments. We confirm this
              structural transfer over remapping in simultaneously recorded
              place and grid cells.",
  journal  = "Cell",
  volume   =  183,
  number   =  5,
  pages    = "1249--1263.e23",
  month    =  nov,
  year     =  2020,
  address  = "United States",
  keywords = "entorhinal cortex; generalization; grid cells; hippocampus;
              neural networks; non-spatial reasoning; place cells;
              representation learning",
  language = "en"
}

@inproceedings{
whittington2022relating,
title={Relating transformers to models and neural representations of the hippocampal formation},
author={James C. R. Whittington and Joseph Warren and Tim E.J. Behrens},
booktitle={International Conference on Learning Representations},
year={2022},
url={https://openreview.net/forum?id=B8DVo9B1YE0}
}

@article{huang2023class,
  title={Class-Incremental Learning: Survey and Performance Evaluation},
  author={Huang, Zhiwu and Wang, Yabin and Paudel, Danda Pani and Shahbazi, Mohamad and Hong, Xiaopeng and Van Gool, Luc},
  journal={IEEE Transactions on Pattern Analysis and Machine Intelligence},
  year={2023},
  doi={10.1109/TPAMI.2023.3243003}
}

@inproceedings{
  kopiczko2024vera,
  title={VeRA: Vector-based Random Matrix Adaptation},
  author={Dawid Jan Kopiczko and Tijmen Blankevoort and Yuki M Asano},
  booktitle={The Twelfth International Conference on Learning Representations},
  year={2024},
  url={https://openreview.net/forum?id=NjNfLdxr3A}
}

@inproceedings{DBLP:conf/icml/GaoWCLWC024,
  author    = {Ziqi Gao and Qichao Wang and Aochuan Chen and Zijing Liu and Bingzhe Wu and Liang Chen and Jia Li},
  title     = {Parameter-Efficient Fine-Tuning with Discrete Fourier Transform},
  booktitle = {Forty-first International Conference on Machine Learning, {ICML} 2024, Vienna, Austria, July 21-27, 2024},
  year      = {2024}
}

@inproceedings{fu2025knowledge,
  title={Knowledge Retention for Continual Model-Based Reinforcement Learning},
  author={Haotian Fu and Yixiang Sun and Michael Littman and George Konidaris},
  booktitle={Proceedings of the Forty-second International Conference on Machine Learning (ICML)},
  year={2025}
}

@inproceedings{wang2021domain,
  title={Generalizing to Unseen Domains: A Survey on Domain Generalization},
  author={Wang, Jindong and Lan, Cuiling and Liu, Chang and Ouyang, Yidong and Qin, Tao},
  booktitle={Proceedings of the Thirtieth International Joint Conference on Artificial Intelligence (IJCAI)},
  pages={4627--4635},
  year={2021},
  url={https://www.ijcai.org/proceedings/2021/0628.pdf}
}

@misc{wang2024comprehensivesurveyforgettingdeep,
      title={A Comprehensive Survey of Forgetting in Deep Learning Beyond Continual Learning}, 
      author={Zhenyi Wang and Enneng Yang and Li Shen and Heng Huang},
      year={2024},
      eprint={2307.09218},
      archivePrefix={arXiv},
      primaryClass={cs.LG},
      url={https://arxiv.org/abs/2307.09218}, 
}

@inproceedings{
arpit2022ensemble,
title={Ensemble of Averages: Improving Model Selection and Boosting Performance in Domain Generalization},
author={Devansh Arpit and Huan Wang and Yingbo Zhou and Caiming Xiong},
booktitle={Advances in Neural Information Processing Systems},
editor={Alice H. Oh and Alekh Agarwal and Danielle Belgrave and Kyunghyun Cho},
year={2022},
url={https://openreview.net/forum?id=peZSbfNnBp4}
}

@inproceedings{
chen2024lfme,
title={{LFME}: A Simple Framework for Learning from Multiple Experts in Domain Generalization},
author={Liang Chen and Yong Zhang and Yibing Song and Zhiqiang Shen and Lingqiao Liu},
booktitle={The Thirty-eighth Annual Conference on Neural Information Processing Systems},
year={2024},
url={https://openreview.net/forum?id=SYjxhKcXoN}
}

@misc{rypeść2024divideforgetensembleselectively,
      title={Divide and not forget: Ensemble of selectively trained experts in Continual Learning}, 
      author={Grzegorz Rypeść and Sebastian Cygert and Valeriya Khan and Tomasz Trzciński and Bartosz Zieliński and Bartłomiej Twardowski},
      year={2024},
      eprint={2401.10191},
      archivePrefix={arXiv},
      primaryClass={cs.LG},
      url={https://arxiv.org/abs/2401.10191}, 
}

@INPROCEEDINGS{10204088,
  author={Liang, Yan-Shuo and Li, Wu-Jun},
  booktitle={2023 IEEE/CVF Conference on Computer Vision and Pattern Recognition (CVPR)}, 
  title={Adaptive Plasticity Improvement for Continual Learning}, 
  year={2023},
  volume={},
  number={},
  pages={7816-7825},
  keywords={Adaptation models;Computer vision;Pattern recognition;Task analysis;Testing;Transfer;meta;low-shot;continual;or long-tail learning},
  doi={10.1109/CVPR52729.2023.00755}}

@inproceedings{10.1007/11776420_14,
author = {Minh, Ha Quang and Niyogi, Partha and Yao, Yuan},
title = {Mercer’s theorem, feature maps, and smoothing},
year = {2006},
isbn = {3540352945},
publisher = {Springer-Verlag},
address = {Berlin, Heidelberg},
url = {https://doi.org/10.1007/11776420_14},
doi = {10.1007/11776420_14},
abstract = {We study Mercer’s theorem and feature maps for several positive definite kernels that are widely used in practice. The smoothing properties of these kernels will also be explored.},
booktitle = {Proceedings of the 19th Annual Conference on Learning Theory},
pages = {154–168},
numpages = {15},
location = {Pittsburgh, PA},
series = {COLT'06}
}

@misc{tarzanagh2024transformerssupportvectormachines,
      title={Transformers as Support Vector Machines}, 
      author={Davoud Ataee Tarzanagh and Yingcong Li and Christos Thrampoulidis and Samet Oymak},
      year={2024},
      eprint={2308.16898},
      archivePrefix={arXiv},
      primaryClass={cs.LG},
      url={https://arxiv.org/abs/2308.16898}, 
}

@misc{amFeatures,
author = {Salvatori, Tommaso and Millidge, Beren and Song, Yuhang and Bogcaz, Rafal and Lukasiewicz, Thomas},
year = {2023},
month = {09},
pages = {},
title = {Associative Memories in the Feature Space},
isbn = {9781643684369},
doi = {10.3233/FAIA230500}
}

@inproceedings{10.5555/3600270.3600785,
author = {Zhou, Yanqi and Lei, Tao and Liu, Hanxiao and Du, Nan and Huang, Yanping and Zhao, Vincent Y. and Dai, Andrew and Chen, Zhifeng and Le, Quoc and Laudon, James},
title = {Mixture-of-experts with expert choice routing},
year = {2022},
isbn = {9781713871088},
publisher = {Curran Associates Inc.},
address = {Red Hook, NY, USA},
abstract = {Sparsely-activated Mixture-of-experts (MoE) models allow the number of parameters to greatly increase while keeping the amount of computation for a given token or a given sample unchanged. However, a poor expert routing strategy can cause certain experts to be under-trained, leading to an expert being under or over-specialized. Prior work allocates a fixed number of experts to each token using a top-k function regardless of the relative importance of different tokens. To address this, we propose a heterogeneous mixture-of-experts employing an expert choice method. Instead of letting tokens select the top-k experts, we have experts selecting the top-k tokens. As a result, each token can be routed to a variable number of experts and each expert can have a fixed bucket size. We systematically study pre-training speedups using the same computational resources of the Switch Transformer top-1 and GShard top-2 gating of prior work and find that our method improves training convergence time by more than 2\texttimes{}. For the same computational cost, our method demonstrates higher performance in fine-tuning 11 selected tasks in the GLUE and SuperGLUE benchmarks. For a smaller activation cost, our method outperforms the T5 dense model in 7 out of the 11 tasks.},
booktitle = {Proceedings of the 36th International Conference on Neural Information Processing Systems},
articleno = {515},
numpages = {12},
location = {New Orleans, LA, USA},
series = {NIPS '22}
}

@inproceedings{Beaulieu2020LearningTC,
  title={Learning to Continually Learn},
  author={Shawn L. E. Beaulieu and Lapo Frati and Thomas Miconi and Joel Lehman and Kenneth O. Stanley and Jeff Clune and Nick Cheney},
  booktitle={European Conference on Artificial Intelligence},
  year={2020},
  url={https://api.semanticscholar.org/CorpusID:211259472}
}

\clearpage
\newpage

\section*{NeurIPS Paper Checklist}

\begin{enumerate}

\item {\bf Claims}
    \item[] Question: Do the main claims made in the abstract and introduction accurately reflect the paper's contributions and scope?
    \item[] Answer: \answerYes{} 
    \item[] Justification: The sections of the papers address all claims made in the abstract and introduction.
    \item[] Guidelines:
    \begin{itemize}
        \item The answer NA means that the abstract and introduction do not include the claims made in the paper.
        \item The abstract and/or introduction should clearly state the claims made, including the contributions made in the paper and important assumptions and limitations. A No or NA answer to this question will not be perceived well by the reviewers. 
        \item The claims made should match theoretical and experimental results, and reflect how much the results can be expected to generalize to other settings. 
        \item It is fine to include aspirational goals as motivation as long as it is clear that these goals are not attained by the paper. 
    \end{itemize}

\item {\bf Limitations}
    \item[] Question: Does the paper discuss the limitations of the work performed by the authors?
    \item[] Answer: \answerYes{} 
    \item[] Justification: A limitations section has been included in the Appendix.
    \item[] Guidelines:
    \begin{itemize}
        \item The answer NA means that the paper has no limitation while the answer No means that the paper has limitations, but those are not discussed in the paper. 
        \item The authors are encouraged to create a separate "Limitations" section in their paper.
        \item The paper should point out any strong assumptions and how robust the results are to violations of these assumptions (e.g., independence assumptions, noiseless settings, model well-specification, asymptotic approximations only holding locally). The authors should reflect on how these assumptions might be violated in practice and what the implications would be.
        \item The authors should reflect on the scope of the claims made, e.g., if the approach was only tested on a few datasets or with a few runs. In general, empirical results often depend on implicit assumptions, which should be articulated.
        \item The authors should reflect on the factors that influence the performance of the approach. For example, a facial recognition algorithm may perform poorly when image resolution is low or images are taken in low lighting. Or a speech-to-text system might not be used reliably to provide closed captions for online lectures because it fails to handle technical jargon.
        \item The authors should discuss the computational efficiency of the proposed algorithms and how they scale with dataset size.
        \item If applicable, the authors should discuss possible limitations of their approach to address problems of privacy and fairness.
        \item While the authors might fear that complete honesty about limitations might be used by reviewers as grounds for rejection, a worse outcome might be that reviewers discover limitations that aren't acknowledged in the paper. The authors should use their best judgment and recognize that individual actions in favor of transparency play an important role in developing norms that preserve the integrity of the community. Reviewers will be specifically instructed to not penalize honesty concerning limitations.
    \end{itemize}

\item {\bf Theory assumptions and proofs}
    \item[] Question: For each theoretical result, does the paper provide the full set of assumptions and a complete (and correct) proof?
    \item[] Answer: \answerYes{} 
    \item[] Justification: All proofs have been provided in the Appendix.
    \item[] Guidelines:
    \begin{itemize}
        \item The answer NA means that the paper does not include theoretical results. 
        \item All the theorems, formulas, and proofs in the paper should be numbered and cross-referenced.
        \item All assumptions should be clearly stated or referenced in the statement of any theorems.
        \item The proofs can either appear in the main paper or the supplemental material, but if they appear in the supplemental material, the authors are encouraged to provide a short proof sketch to provide intuition. 
        \item Inversely, any informal proof provided in the core of the paper should be complemented by formal proofs provided in appendix or supplemental material.
        \item Theorems and Lemmas that the proof relies upon should be properly referenced. 
    \end{itemize}

    \item {\bf Experimental result reproducibility}
    \item[] Question: Does the paper fully disclose all the information needed to reproduce the main experimental results of the paper to the extent that it affects the main claims and/or conclusions of the paper (regardless of whether the code and data are provided or not)?
    \item[] Answer: \answerYes{} 
    \item[] Justification: High-level details have been provided in the Experiments and Results section. Additional details have been included in the Appendix.
    \item[] Guidelines:
    \begin{itemize}
        \item The answer NA means that the paper does not include experiments.
        \item If the paper includes experiments, a No answer to this question will not be perceived well by the reviewers: Making the paper reproducible is important, regardless of whether the code and data are provided or not.
        \item If the contribution is a dataset and/or model, the authors should describe the steps taken to make their results reproducible or verifiable. 
        \item Depending on the contribution, reproducibility can be accomplished in various ways. For example, if the contribution is a novel architecture, describing the architecture fully might suffice, or if the contribution is a specific model and empirical evaluation, it may be necessary to either make it possible for others to replicate the model with the same dataset, or provide access to the model. In general. releasing code and data is often one good way to accomplish this, but reproducibility can also be provided via detailed instructions for how to replicate the results, access to a hosted model (e.g., in the case of a large language model), releasing of a model checkpoint, or other means that are appropriate to the research performed.
        \item While NeurIPS does not require releasing code, the conference does require all submissions to provide some reasonable avenue for reproducibility, which may depend on the nature of the contribution. For example
        \begin{enumerate}
            \item If the contribution is primarily a new algorithm, the paper should make it clear how to reproduce that algorithm.
            \item If the contribution is primarily a new model architecture, the paper should describe the architecture clearly and fully.
            \item If the contribution is a new model (e.g., a large language model), then there should either be a way to access this model for reproducing the results or a way to reproduce the model (e.g., with an open-source dataset or instructions for how to construct the dataset).
            \item We recognize that reproducibility may be tricky in some cases, in which case authors are welcome to describe the particular way they provide for reproducibility. In the case of closed-source models, it may be that access to the model is limited in some way (e.g., to registered users), but it should be possible for other researchers to have some path to reproducing or verifying the results.
        \end{enumerate}
    \end{itemize}

\item {\bf Open access to data and code}
    \item[] Question: Does the paper provide open access to the data and code, with sufficient instructions to faithfully reproduce the main experimental results, as described in supplemental material?
    \item[] Answer: \answerYes{} 
    \item[] Justification: The code shall be publicly released upon acceptance of the paper.
    \item[] Guidelines:
    \begin{itemize}
        \item The answer NA means that paper does not include experiments requiring code.
        \item Please see the NeurIPS code and data submission guidelines (\url{https://nips.cc/public/guides/CodeSubmissionPolicy}) for more details.
        \item While we encourage the release of code and data, we understand that this might not be possible, so “No” is an acceptable answer. Papers cannot be rejected simply for not including code, unless this is central to the contribution (e.g., for a new open-source benchmark).
        \item The instructions should contain the exact command and environment needed to run to reproduce the results. See the NeurIPS code and data submission guidelines (\url{https://nips.cc/public/guides/CodeSubmissionPolicy}) for more details.
        \item The authors should provide instructions on data access and preparation, including how to access the raw data, preprocessed data, intermediate data, and generated data, etc.
        \item The authors should provide scripts to reproduce all experimental results for the new proposed method and baselines. If only a subset of experiments are reproducible, they should state which ones are omitted from the script and why.
        \item At submission time, to preserve anonymity, the authors should release anonymized versions (if applicable).
        \item Providing as much information as possible in supplemental material (appended to the paper) is recommended, but including URLs to data and code is permitted.
    \end{itemize}

\item {\bf Experimental setting/details}
    \item[] Question: Does the paper specify all the training and test details (e.g., data splits, hyperparameters, how they were chosen, type of optimizer, etc.) necessary to understand the results?
    \item[] Answer: \answerYes{} 
    \item[] Justification: The required details have been provided in the Appendix.
    \item[] Guidelines:
    \begin{itemize}
        \item The answer NA means that the paper does not include experiments.
        \item The experimental setting should be presented in the core of the paper to a level of detail that is necessary to appreciate the results and make sense of them.
        \item The full details can be provided either with the code, in appendix, or as supplemental material.
    \end{itemize}

\item {\bf Experiment statistical significance}
    \item[] Question: Does the paper report error bars suitably and correctly defined or other appropriate information about the statistical significance of the experiments?
    \item[] Answer: \answerYes{} 
    \item[] Justification: The presented results include the mean and standard deviations across three runs.
    \item[] Guidelines:
    \begin{itemize}
        \item The answer NA means that the paper does not include experiments.
        \item The authors should answer "Yes" if the results are accompanied by error bars, confidence intervals, or statistical significance tests, at least for the experiments that support the main claims of the paper.
        \item The factors of variability that the error bars are capturing should be clearly stated (for example, train/test split, initialization, random drawing of some parameter, or overall run with given experimental conditions).
        \item The method for calculating the error bars should be explained (closed form formula, call to a library function, bootstrap, etc.)
        \item The assumptions made should be given (e.g., Normally distributed errors).
        \item It should be clear whether the error bar is the standard deviation or the standard error of the mean.
        \item It is OK to report 1-sigma error bars, but one should state it. The authors should preferably report a 2-sigma error bar than state that they have a 96\% CI, if the hypothesis of Normality of errors is not verified.
        \item For asymmetric distributions, the authors should be careful not to show in tables or figures symmetric error bars that would yield results that are out of range (e.g. negative error rates).
        \item If error bars are reported in tables or plots, The authors should explain in the text how they were calculated and reference the corresponding figures or tables in the text.
    \end{itemize}

\item {\bf Experiments compute resources}
    \item[] Question: For each experiment, does the paper provide sufficient information on the computer resources (type of compute workers, memory, time of execution) needed to reproduce the experiments?
    \item[] Answer: \answerYes{} 
    \item[] Justification: All details regarding required compute have been provided in the Appendix.
    \item[] Guidelines:
    \begin{itemize}
        \item The answer NA means that the paper does not include experiments.
        \item The paper should indicate the type of compute workers CPU or GPU, internal cluster, or cloud provider, including relevant memory and storage.
        \item The paper should provide the amount of compute required for each of the individual experimental runs as well as estimate the total compute. 
        \item The paper should disclose whether the full research project required more compute than the experiments reported in the paper (e.g., preliminary or failed experiments that didn't make it into the paper). 
    \end{itemize}
    
\item {\bf Code of ethics}
    \item[] Question: Does the research conducted in the paper conform, in every respect, with the NeurIPS Code of Ethics \url{https://neurips.cc/public/EthicsGuidelines}?
    \item[] Answer: \answerYes{} 
    \item[] Justification: The authors have confirmed to the NeurIPS code of Ethics.
    \item[] Guidelines:
    \begin{itemize}
        \item The answer NA means that the authors have not reviewed the NeurIPS Code of Ethics.
        \item If the authors answer No, they should explain the special circumstances that require a deviation from the Code of Ethics.
        \item The authors should make sure to preserve anonymity (e.g., if there is a special consideration due to laws or regulations in their jurisdiction).
    \end{itemize}

\item {\bf Broader impacts}
    \item[] Question: Does the paper discuss both potential positive societal impacts and negative societal impacts of the work performed?
    \item[] Answer: \answerNA{} 
    \item[] Justification: The work does not have any societal impacts, as it proposes a general technique in improving ML systems.
    \item[] Guidelines:
    \begin{itemize}
        \item The answer NA means that there is no societal impact of the work performed.
        \item If the authors answer NA or No, they should explain why their work has no societal impact or why the paper does not address societal impact.
        \item Examples of negative societal impacts include potential malicious or unintended uses (e.g., disinformation, generating fake profiles, surveillance), fairness considerations (e.g., deployment of technologies that could make decisions that unfairly impact specific groups), privacy considerations, and security considerations.
        \item The conference expects that many papers will be foundational research and not tied to particular applications, let alone deployments. However, if there is a direct path to any negative applications, the authors should point it out. For example, it is legitimate to point out that an improvement in the quality of generative models could be used to generate deepfakes for disinformation. On the other hand, it is not needed to point out that a generic algorithm for optimizing neural networks could enable people to train models that generate Deepfakes faster.
        \item The authors should consider possible harms that could arise when the technology is being used as intended and functioning correctly, harms that could arise when the technology is being used as intended but gives incorrect results, and harms following from (intentional or unintentional) misuse of the technology.
        \item If there are negative societal impacts, the authors could also discuss possible mitigation strategies (e.g., gated release of models, providing defenses in addition to attacks, mechanisms for monitoring misuse, mechanisms to monitor how a system learns from feedback over time, improving the efficiency and accessibility of ML).
    \end{itemize}
    
\item {\bf Safeguards}
    \item[] Question: Does the paper describe safeguards that have been put in place for responsible release of data or models that have a high risk for misuse (e.g., pretrained language models, image generators, or scraped datasets)?
    \item[] Answer: \answerNA{} 
    \item[] Justification: The paper poses no such risks.
    \item[] Guidelines:
    \begin{itemize}
        \item The answer NA means that the paper poses no such risks.
        \item Released models that have a high risk for misuse or dual-use should be released with necessary safeguards to allow for controlled use of the model, for example by requiring that users adhere to usage guidelines or restrictions to access the model or implementing safety filters. 
        \item Datasets that have been scraped from the Internet could pose safety risks. The authors should describe how they avoided releasing unsafe images.
        \item We recognize that providing effective safeguards is challenging, and many papers do not require this, but we encourage authors to take this into account and make a best faith effort.
    \end{itemize}

\item {\bf Licenses for existing assets}
    \item[] Question: Are the creators or original owners of assets (e.g., code, data, models), used in the paper, properly credited and are the license and terms of use explicitly mentioned and properly respected?
    \item[] Answer: \answerYes{} 
    \item[] Justification: All appropriate sources have been cited.
    \item[] Guidelines:
    \begin{itemize}
        \item The answer NA means that the paper does not use existing assets.
        \item The authors should cite the original paper that produced the code package or dataset.
        \item The authors should state which version of the asset is used and, if possible, include a URL.
        \item The name of the license (e.g., CC-BY 4.0) should be included for each asset.
        \item For scraped data from a particular source (e.g., website), the copyright and terms of service of that source should be provided.
        \item If assets are released, the license, copyright information, and terms of use in the package should be provided. For popular datasets, \url{paperswithcode.com/datasets} has curated licenses for some datasets. Their licensing guide can help determine the license of a dataset.
        \item For existing datasets that are re-packaged, both the original license and the license of the derived asset (if it has changed) should be provided.
        \item If this information is not available online, the authors are encouraged to reach out to the asset's creators.
    \end{itemize}

\item {\bf New assets}
    \item[] Question: Are new assets introduced in the paper well documented and is the documentation provided alongside the assets?
    \item[] Answer: \answerNA{}{} 
    \item[] Justification: The paper does not release new assets.
    \item[] Guidelines:
    \begin{itemize}
        \item The answer NA means that the paper does not release new assets.
        \item Researchers should communicate the details of the dataset/code/model as part of their submissions via structured templates. This includes details about training, license, limitations, etc. 
        \item The paper should discuss whether and how consent was obtained from people whose asset is used.
        \item At submission time, remember to anonymize your assets (if applicable). You can either create an anonymized URL or include an anonymized zip file.
    \end{itemize}

\item {\bf Crowdsourcing and research with human subjects}
    \item[] Question: For crowdsourcing experiments and research with human subjects, does the paper include the full text of instructions given to participants and screenshots, if applicable, as well as details about compensation (if any)? 
    \item[] Answer: \answerNA{} 
    \item[] Justification: The paper does not involve crowdsourcing nor research with human subjects.
    \item[] Guidelines:
    \begin{itemize}
        \item The answer NA means that the paper does not involve crowdsourcing nor research with human subjects.
        \item Including this information in the supplemental material is fine, but if the main contribution of the paper involves human subjects, then as much detail as possible should be included in the main paper. 
        \item According to the NeurIPS Code of Ethics, workers involved in data collection, curation, or other labor should be paid at least the minimum wage in the country of the data collector. 
    \end{itemize}

\item {\bf Institutional review board (IRB) approvals or equivalent for research with human subjects}
    \item[] Question: Does the paper describe potential risks incurred by study participants, whether such risks were disclosed to the subjects, and whether Institutional Review Board (IRB) approvals (or an equivalent approval/review based on the requirements of your country or institution) were obtained?
    \item[] Answer: \answerNA{} 
    \item[] Justification: The paper does not involve crowdsourcing nor research with human subjects.
    \item[] Guidelines:
    \begin{itemize}
        \item The answer NA means that the paper does not involve crowdsourcing nor research with human subjects.
        \item Depending on the country in which research is conducted, IRB approval (or equivalent) may be required for any human subjects research. If you obtained IRB approval, you should clearly state this in the paper. 
        \item We recognize that the procedures for this may vary significantly between institutions and locations, and we expect authors to adhere to the NeurIPS Code of Ethics and the guidelines for their institution. 
        \item For initial submissions, do not include any information that would break anonymity (if applicable), such as the institution conducting the review.
    \end{itemize}

\item {\bf Declaration of LLM usage}
    \item[] Question: Does the paper describe the usage of LLMs if it is an important, original, or non-standard component of the core methods in this research? Note that if the LLM is used only for writing, editing, or formatting purposes and does not impact the core methodology, scientific rigorousness, or originality of the research, declaration is not required.
    \item[] Answer: \answerNA{} 
    \item[] Justification: The core method development in this research does not involve LLMs as any important, original, or non-standard components.
    \item[] Guidelines:
    \begin{itemize}
        \item The answer NA means that the core method development in this research does not involve LLMs as any important, original, or non-standard components.
        \item Please refer to our LLM policy (\url{https://neurips.cc/Conferences/2025/LLM}) for what should or should not be described.
    \end{itemize}

\end{enumerate}

\end{document}